%% file: arxiv.tex
\documentclass{article}

\usepackage{microtype}
\usepackage{graphicx}
\usepackage{subcaption}
\usepackage{booktabs} %
\usepackage{multirow}
\usepackage{url}

\usepackage{hyperref}

\usepackage[preprint]{icml2026}

\usepackage{amsmath}
\usepackage{amssymb}
\usepackage{mathtools}
\usepackage{amsthm}
\usepackage[most]{tcolorbox}

\usepackage[most]{tcolorbox}
\newtcolorbox{promptbox}[1]{
    colback=gray!5!white,     %
    colframe=gray!75!black,   %
    fonttitle=\bfseries\sffamily,
    title=#1,                 %
    enhanced,
    attach boxed title to top left={yshift=-2mm, xshift=2mm},
    boxed title style={colback=gray!75!black},
    sharp corners=south,      %
    breakable,                %
    fontupper=\small\ttfamily %
}

\usepackage{tabularx}
\usepackage{booktabs}

\newtcolorbox{resultsbox}[1]{
    colback=white,
    colframe=gray!20!black,
    fonttitle=\bfseries\sffamily,
    title=#1,
    enhanced,
    attach boxed title to top left={yshift=-2mm, xshift=2mm},
    boxed title style={colback=gray!20!black},
    sharp corners,
    drop shadow, %
    breakable
}

\theoremstyle{plain}
\newtheorem{theorem}{Theorem}[section]
\newtheorem{proposition}[theorem]{Proposition}

\theoremstyle{definition}

\newtheorem{assumption}[theorem]{Assumption}
\theoremstyle{remark}

\newcommand{\myref}[2]{%
    \hyperref[#2]{#1~\ref*{#2}}%
}

\usepackage[textsize=tiny]{todonotes}

\icmltitlerunning{Causal Effect Estimation with Latent Textual Treatments}

\usepackage{etoolbox}
\AtBeginEnvironment{align}{\setlength{\abovedisplayskip}{5pt} \setlength{\belowdisplayskip}{5pt}}
\AtBeginEnvironment{equation}{\setlength{\abovedisplayskip}{5pt} \setlength{\belowdisplayskip}{5pt}}
 
\begin{document}

\twocolumn[
  \icmltitle{Causal Effect Estimation with Latent Textual Treatments}

  \icmlsetsymbol{equal}{*}

  \begin{icmlauthorlist}
    \icmlauthor{Omri Feldman}{huji}
    \icmlauthor{Amar Venugopal}{stanfordecon}
    \icmlauthor{Jann Spiess}{stanfordbusiness}
    \icmlauthor{Amir Feder}{huji}
  \end{icmlauthorlist}

  \icmlaffiliation{huji}{School of Computer Science and Engineering, The Hebrew University of Jerusalem}
  \icmlaffiliation{stanfordecon}{Department of Economics, Stanford University}
  \icmlaffiliation{stanfordbusiness}{Graduate School of Business, Stanford University}

  \icmlcorrespondingauthor{Omri Feldman}{omri.feldman@mail.huji.ac.il}

  \icmlkeywords{Machine Learning, ICML}

  \vskip 0.3in
]

\printAffiliationsAndNotice{}

\begin{abstract}

Understanding the causal effects of text on downstream outcomes is a central task in many applications.
Estimating such effects requires researchers to run controlled experiments that systematically vary textual features. While large language models (LLMs) hold promise for generating text, producing and evaluating \emph{controlled} variation requires more careful attention.
In this paper, we present an end-to-end pipeline for the generation and causal estimation of latent textual interventions. Our work first performs hypothesis generation and steering via sparse autoencoders (SAEs), followed by robust causal estimation. Our pipeline addresses both computational and statistical challenges in text-as-treatment experiments.
We demonstrate that naive estimation of causal effects suffers from significant bias as text inherently conflates treatment and covariate information. We describe the estimation bias induced in this setting and propose a solution based on covariate residualization.
Our empirical results show that our pipeline effectively induces variation in target features and mitigates estimation error, providing a robust foundation for causal effect estimation in text-as-treatment settings.
\end{abstract}

\section{Introduction}

Whether it is the aspects of political speech that are effective in creating voter support or how social media posts can be most engaging, understanding the causal effect of textual features on downstream outcomes is of great interest across domains \cite{gerber2011large, fenerty2012effect,  egan2013skilled, fong2016discovery, green2019get, luong2021text, ash2023text, ellickson2023estimating}.
However, researchers face two particular challenges when designing experimental pipelines for such questions. First, they need to be able to modify text in systematic ways, while ensuring that documents remain readable. Second, valid causal estimation requires a separation  between those intended modifications and other variations in the text \cite{feder2022causal}.
While large language models (LLMs) have revolutionized our ability to generate texts at scale, addressing both of the above challenges requires several key methodological enhancements.

\begin{figure}
    \centering
    \includegraphics[width=0.95\linewidth]{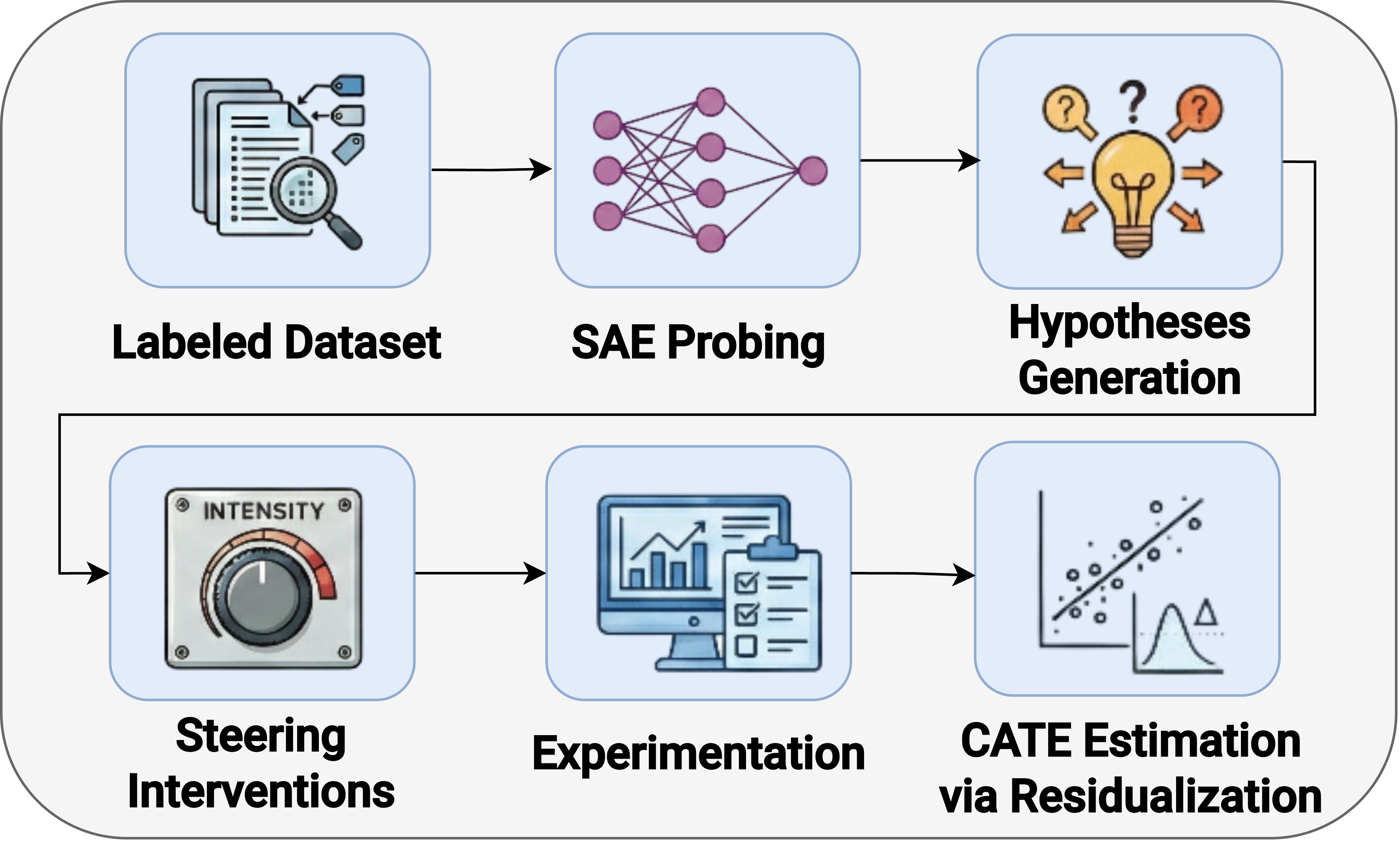}
    \caption{An overview of our methodology. We start with a dataset of text documents, with labels classifying the semantic concept we want to intervene on. We run probes on SAE representations and identify the most persistent and semantically relevant SAE features that serve as our hypotheses. We then steer texts to generate quasi-counterfactuals that are used in downstream experiments. Finally, we present a novel residualization approach for CATE estimation in such text-as-treatment setting, alongside theoretical guarantees and empirical evidence.}
    \label{fig:birdseye}
    \vspace{-15pt}
\end{figure}

In this paper, we propose a novel framework for applying controlled interventions on textual features and estimating their causal effect on downstream outcomes.
We build on the high-dimensional representation of language in LLMs to isolate semantic features and systematically intervene on those representations, producing text that varies primarily in targeted features. To facilitate a methodical choice between various candidate features, we also introduce a scoring metric that quantifies the effectiveness of our interventions and the potential undesired confounding artifacts it may introduce. We then estimate causal effects on downstream outcomes using methods from causal machine learning.

To understand the appeal of this approach, consider the following example. \citet{martinvenugopal} recently assembled a dataset of transcripts of local government meetings across various towns in California. A key feature of these meetings is a public comment period, in which members of the public may address the municipal government directly. These comments can vary widely in their ``civility", for which a score is provided for every observation in the dataset. A social-science researcher may be interested in understanding which aspects of speech result in a comment being regarded as more or less civil. Since civility is a complex concept, there may be many atomic features in the text that can impact the perceived civility of the comment. Our framework provides a researcher with a tool to generate hypotheses about which atomic features impact perceived civility and, subsequently, generate quasi-counterfactual comments that vary in these features. 

The researcher's goal may then become to validate these hypotheses through a randomized experiment, in which these quasi-counterfactual texts are shown to respondents who evaluate their civility. Similarly, the researcher may instead be interested in seeing if changes in concepts in a text that are correlated with civility impact some alternative downstream outcome of interest, such as respondent agreement. Assessing causality in such a setting presents unique challenges. While randomization of steered texts to respondents will ensure potential outcomes are uncorrelated with treatment assignment, steering itself is rarely ``surgical", and will likely perturb correlated nuisance features. As a result, steered texts are not true counterfactuals, and we control for such unwanted variations by adjusting for dense contextual embeddings using robust causal machine learning.

In implementing our approach to causal estimation, we must address a fundamental challenge to causal identification: since the treatments (along with possible confounders) are expressed in the text, embeddings produced by a sufficiently expressive model will perfectly predict treatment itself. Using such embeddings as controls will (in the parlance of causal inference) lead to a positivity violation, with no overlap between treated and control documents and degenerate propensity scores. This leads to a central trade-off: controlling for more sophisticated text embeddings destroys variation in treatment, while weaker embedding models fail to provide sufficient control for nuisance features, leading to omitted-variable bias.

To resolve this tension, we propose a novel residualization procedure which removes treatment information from the embeddings before proceeding with treatment effect estimation. We provide theoretical results that establish causal identification and bound the resulting estimation error. We then validate this approach in practice via semi-synthetic simulations and show consistent, marked improvements over naive estimation procedures that control for full embeddings directly, in line with our theoretical guarantees (see example in \myref{Figure}{fig:cate_estimation}). Our approach constitutes a robust foundation for causal effect estimation with latent treatments, and can be extended naturally to other modalities.

\begin{figure}[ht]
    \centering
    \includegraphics[width=0.65\linewidth]{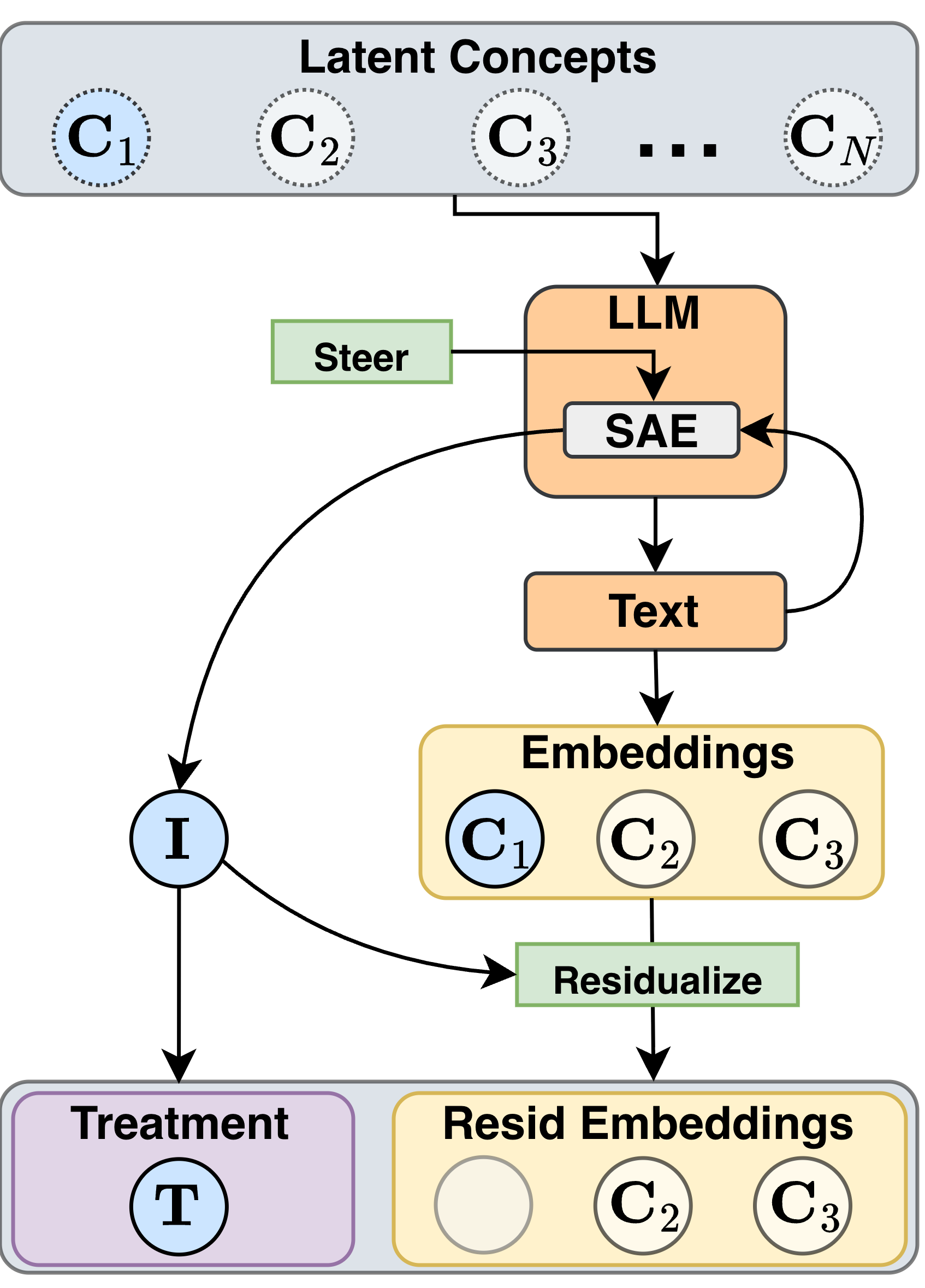}
    \caption{Our residualization pipeline. We assume that natural language is generated from latent semantic concepts, which serve as inputs to the LLM. By applying SAE steering interventions, we modify the model’s generation process and produce quasi-counterfactual texts. We then measure their ex-post intensity $I$, and use embedding model for representing the text use as controls. Finally, using our measured intensity, we residualize the embeddings to remove the target concept $\mathbf{C}_1$, which ensures that the treatment information is separated from the rest of text.}
    \label{fig:steerresidflow}
    \vspace{-15pt}
\end{figure}

\begin{figure}[h]
    \centering
    \vspace{-5pt}
    \includegraphics[width=0.8\linewidth]{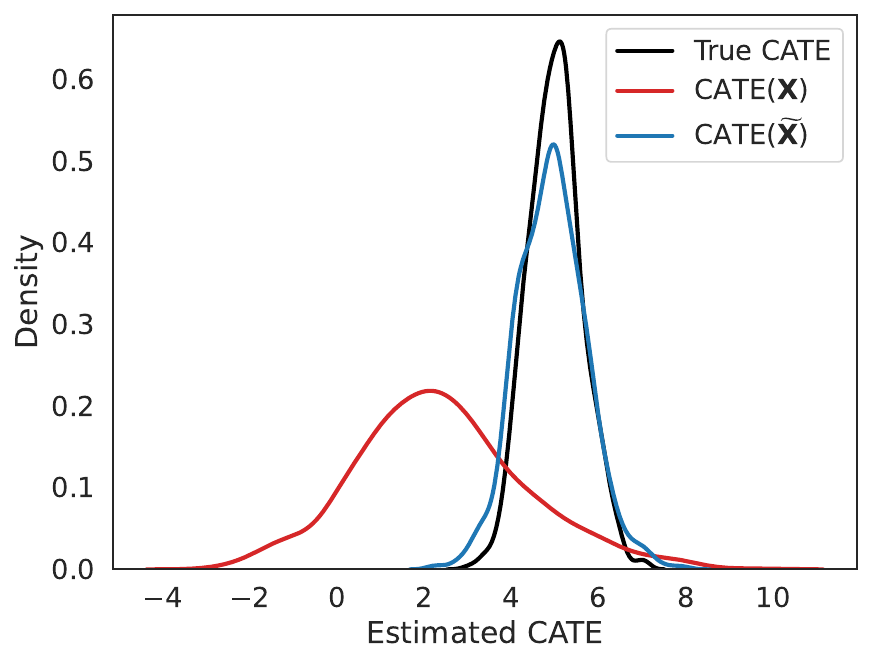}
    \includegraphics[width=0.8\linewidth]{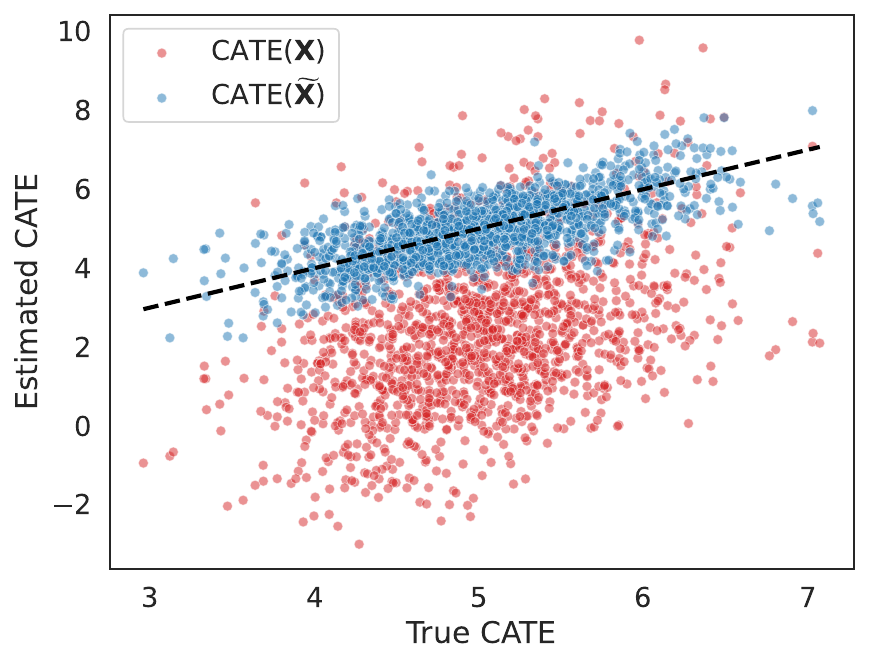}
    \caption{CATE simulation from Dataset C, using Llama-3.1-8B-Instruct, layer 23 feature 53435, and all-MiniLM-L6-v2 embeddings. Top: distribution of true (black) and estimated CATE based on raw (red) and residualized (blue) covariates. Bottom: scatter of true vs. estimated CATE values based on raw (red) and residualized (blue) covariates, with the 45-degree line in black. Residualization performed via dimension-by-dimension strategy.}
    \label{fig:cate_estimation}
    \vspace{-15pt}
\end{figure}

\section{Preliminaries}

In this section, we formalize the notation for Sparse Autoencoders and the causal inference framework used throughout.

\subsection{Sparse Autoencoders (SAEs)}
Sparse Autoencoders are designed to decompose the dense, polysemantic activations of an LLM into a high-dimensional, sparse latent space that corresponds more closely to interpretable features \citep{bricken2023monosemanticity}.

Let $a \in \mathbb{R}^d$ be the vector of activations from a specific layer of a language model. An SAE consists of an encoder $W_{enc} \in \mathbb{R}^{h \times d}$ and a decoder $W_{dec} \in \mathbb{R}^{d \times h}$, where the latent dimension $h$ is typically much larger than the hidden dimension $d$. The encoder maps the activations into a sparse feature vector $z \in \mathbb{R}^h$ and the decoder maps $z$ back to $\mathbb{R}^d$,
\begin{align}
    &z = \sigma(W_{enc}a + b_{enc}), %
    &
    &\hat{a} = W_{dec}z + b_{dec} \label{eq:saedec},
\end{align}
where $\sigma(\cdot)$ is a nonlinear activation function (e.g. ReLU, JumpReLU, TopK) and $b_{enc}$ and $b_{dec}$ are bias terms.
During training, a sparsity penalty is applied to $z$ to ensure that only a small number of features are active for any given input. We denote the indices of the SAE latent space as $\phi \in \Phi$, latent feature activation as $z_\phi$, and the $\phi$-th column of the decoder matrix as $W_{dec(\phi)}$.

\subsection{Conditional Average Treatment Effects (CATEs)}
\label{sec:TE-Intro}

Our ultimate goal is to estimate the causal effect of controlled variation in text features on downstream outcomes of interest.
We adopt the potential-outcomes framework from causal inference.
We write $T \in \{0, 1\}$ for a binary treatment variable (e.g., the presence of a specific linguistic feature) and $Y$ for an observed downstream outcome. For a given unit, we consider potential outcomes $Y(1)$ and $Y(0)$, representing the outcome under treatment and control, respectively.
The realized outcome is $Y = Y(T)$.
The individual treatment effect is $Y(1) - Y(0)$, which is unobservable. We aim to estimate the Conditional Average Treatment Effect (CATE)
\begin{equation}
\label{eqn:CATE}
    \tau(\tilde{x}) = \mathbb{E}[Y(1) - Y(0) \mid \tilde{X} {=} \tilde{x}],
\end{equation}
which is the expected effect of the treatment given a set of covariates $\tilde{X}{=}\tilde{x}$.
In the context of \textit{text-as-treatment}, $\tilde{X}$ typically consists of high-dimensional contextual embeddings that capture the nuisance properties of the text (e.g., length, base semantic content, style). Standard identification of $\tau(\tilde{x})$ requires two fundamental assumptions:

\vspace{-10pt}
\begin{enumerate}
\itemsep2pt
    \item \textbf{Unconfoundedness (Ignorability):} The potential outcomes are independent of the treatment assignment given the covariates, $(Y(0), Y(1)) \perp T \mid \tilde{X}$.
    \item \textbf{Positivity (Overlap):} For all $\tilde{x}$ in the support of $\tilde{X}$, the probability of receiving the treatment is bounded away from 0 and 1: $0 < P(T=1 \mid \tilde{X} {=} \tilde{x}) < 1$.
\end{enumerate}
\vspace{-10pt}

Throughout, we will maintain the additional assumption that units $i$ with observations $(Y_i, T_i, \tilde{X}_i)$ are drawn i.i.d from a population distribution. In particular, the SUTVA assumption of no interference is implicit: potential outcomes for a given unit are independent of the treatment status of other units, $(Y_i(0), Y_i(1)) \perp T_{j\neq i}$.

\section{Related Work}

\subsection{SAEs and Steering}

SAEs have been shown to be useful in decomposing the polysemantic representations of LLMs into monosemantic, human-interpretable features \cite{Bills2023Autointerp, bricken2023monosemanticity, cunningham2023sparseautoencodershighlyinterpretable, templeton2024scaling}. Recent research has leveraged the granularity of these features for steering and hypothesis generation \cite{movva2025sparseautoencodershypothesisgeneration, peng2025usesparseautoencodersdiscover, zheng2025model}. However, alongside their usefulness for interpretability, benchmark results \cite{wu2025axbenchsteeringllmssimple, huang2024ravel, kantamneni2025sparseautoencodersusefulcase} suggest that SAEs and SAE-based steering perform poorly compared to other methods. Our work extends this literature by proposing a new metric based on the model's response to a steering intervention, establishing a more rigorous link between a feature's interpretability and its utility in downstream causal experiments.

\subsection{Causal Inference with Text}

Estimating causal quantities with textual data presents multiple unique challenges \cite{feder2022causal, egami2022make}. Recent work addressed these challenges when the textual information is used as controls \cite{veitch2020adapting, mozer2020matching}, treatment \cite{fong2016discovery, pryzant2021causal, gui2022causal, ash2023text, angelopoulos2024value}, and outcome \cite{modarressi2025causal}. Particularly relevant to our work is \citet{imai2025causalrepresentationlearninggenerative}, which studies a similar setting under slightly different assumptions. We contribute to this literature by presenting a novel steering pipeline that enables controlled interventions and addresses inherent positivity violations in the text-as-treatment setting.

\section{Datasets}
\label{sec:Datasets}

Throughout this paper, we present empirical results from three labeled text datasets.

\textbf{Dataset A: Local Government Speech.} A dataset of city council meetings in California produced by \citet{martinvenugopal}. The dataset is comprised of statements made by members of the public in city council meetings across California. The label corresponds to a binarized measure of ``civility": a label of 1 indicates a highly civil statement, while a label of 0 corresponds to a highly uncivil statement.

\textbf{Dataset B: Political Ads.} A dataset of US political advertisements, as presented in \citet{sobhani2024multienvironmenttopicmodels}.
A label of 1 indicates a Republican politician and a label of 0 indicates a Democratic politician.

\textbf{Dataset C: Us Vs. Them.} A dataset of reddit comments posted
in response to news articles across the political
spectrum, curated by \citet{huguet-cabot-etal-2021-us}. The comments were scored for populist attitudes towards out-groups, which we binarize such that a label of 1 indicates a discriminatory comment towards the out-group and a label of 0 indicates a supportive comment towards the out-group.

\section{Hypothesis Generation}

The hypothesis generation phase identifies a set of candidate SAE features $\Phi_+ \subset \Phi$ that differentiate between semantic classes. In particular, we use a labeled dataset $\mathcal{D} = {(X_i^\mathcal{D}, Y_i^\mathcal{D})}_{i=1}^{N}$ to find the features that correlate with the target concept. To narrow the search space from the high-dimensional and sparse SAE latent space, we first apply a series of standard filtering operations that removes spurious and noisy features (see \myref{Appendix}{sec:candidatefiltering} for details).

\subsection{Sparse Linear Probing}

To ensure features are comparable, we calculate $z$-scores of the activations of the remaining features.
For each feature $\phi$, we then calculate the absolute mean difference
\begin{equation}
\Delta_\phi = |\mathbb{E}[Z_\phi \mid Y^{\mathcal{D}}{=}1] - \mathbb{E}[Z_\phi \mid Y^{\mathcal{D}}{=}0]|
\end{equation}
in normalized activations $Z_\phi$ between the two classes.
We retain only the $k$ features with the highest $\Delta_\phi$, experimenting with $k \in \{16, 32, 64, 128, 256\}$.
To identify which of these top-$k$ features are the most important for the classification, we train an $L_1$-regularized logistic regression model (i.e. a linear probe). We perform $K$-fold cross-validation to select the optimal regularization parameter. After identifying the best-performing model, we extract the coefficient $\beta_\phi$ for each feature and rank the features by $|\beta_\phi|$. Given that the coefficients of an $L_1$-regularized Logistic Regression are biased and sensitive to the number of features, we turn to feature persistence as a way to rank our SAE features and measure the median position of each feature across the different regression fits. This gives a higher ranking to features with a consistently high coefficient.

\subsection{Feature Selection for Steering Interventions}

To transform our ranked candidates into a set of steering vectors, we map each feature index to its corresponding semantic description provided by Neuronpedia \cite{neuronpedia}. Our analysis shows that the features identified by the sparse probes indeed represent the multi-faceted nature of the target concepts, as captured by our training datasets. Specifically, we see that some features relate to specific linguistic markers, and some to broader, topical concepts.

While the linear probe identifies features as predictive, we leave it to the practitioner to determine which specific aspects of the target concept are most relevant as a causal hypothesis. For the purpose of this work, we construct $\Phi_+$ by manually curating a subset of features that exhibit both high persistence and clear semantic alignment with the core aspects of our datasets. This human-in-the-loop filtering ensures that the resulting steering experiments are relevant and target interesting scientific questions.

\section{Steering}
\label{sec:steering}

With the candidate features $\Phi_{+}$ identified, we generate quasi-counterfactual texts by intervening on the latent representation of the specific feature $\phi$ in the SAE, where each such feature can serve as a treatment in a downstream experiment.

We prompt the LLM to act as a relevant persona and ask it to respond to or comment on an input text, without constraints on tone, sentiment, or level of agreement. From a research design perspective, this allows practitioners to implement a wide array of studies. For example, LLMs can also be prompted to rephrase, summarize, or provide assistance. Moreover, our proposed approach is not limited to using the same dataset for discovery and steering, with trivial cross-dataset transferability.

In each forward pass of the model we implement an adaptive steering mechanism that scales the intervention according to the local token representation. Let $a \in \mathbb{R}^d$ represent the residual stream activations and $\|a\|_2$ represent the $L_2$ norm of $a$. For a target feature $\phi \in \Phi_{+}$, we define the steered latent vector $z'$ such that
\begin{equation}
    z'_{\phi} = z_{\phi} + \alpha \cdot \|a\|_2,
    \label{eq:steeredacts}
\end{equation}
where $\alpha \in \mathcal{A}$ is the steering factor. Unlike global steering methods, this approach ensures that the steering operation scales with the current state of the residual stream, preserving its relative magnitude across varied contexts. This is particularly useful in our setting for comparing steering interventions across layers and models.

The modified activations are then reconstructed and passed back to the model, similarly to \eqref{eq:saedec}. To maintain the original model's performance and nuances, we preserve the SAE reconstruction error $\epsilon_a = a - \hat{a}$:
\begin{equation}
    a' = W_{dec}z' + b_{dec} + \epsilon_a
    \label{eq:reconstructed}
\end{equation}
We ran our experiments on three instruction-tuned LLMs: Gemma-2-9B-IT, \cite{gemmateam2024gemma2improvingopen}, Llama-3.1-8B-Instruct \cite{grattafiori2024llama3herdmodels} and Qwen2.5-7B-Instruct \cite{qwen2025qwen25technicalreport}. For Gemma-2-9B-IT, we used 16k SAEs for layers 9, 20, and 31 \cite{lieberum2024gemma}. For Llama-3.1-8B-Instruct and Qwen2.5-7B-Instruct, we used 131k SAEs for layers 7, 15 and 23 
\cite{arditi2024finding}. Refer to \myref{Appendix}{sec:appendixsteering} for further details, prompts and examples.

\subsection{Concept Intensity Scores}
\label{conceptintensity}

While the steering factor $\alpha$ is a controlled hyperparameter of the intervention, the actual concept intensity of the generated text can vary across contexts, specific features, and models. Thus, to evaluate the success of our intervention, we define the ex-post concept intensity as the mean token-level cosine similarity between the model's activations and the relevant decoder column. For a sequence $\nu$ of length $M$, the raw intensity $I$ is given by:

\begin{equation}
    I(\nu, \phi) = \frac{1}{M} \sum_{j=1}^M \frac{a'_{j} \cdot W_{dec(\phi)}}{\|a'_{j}\| \|W_{dec(\phi)}\|}
\end{equation}

\begin{figure*}
    \centering
    \includegraphics[width=\linewidth]{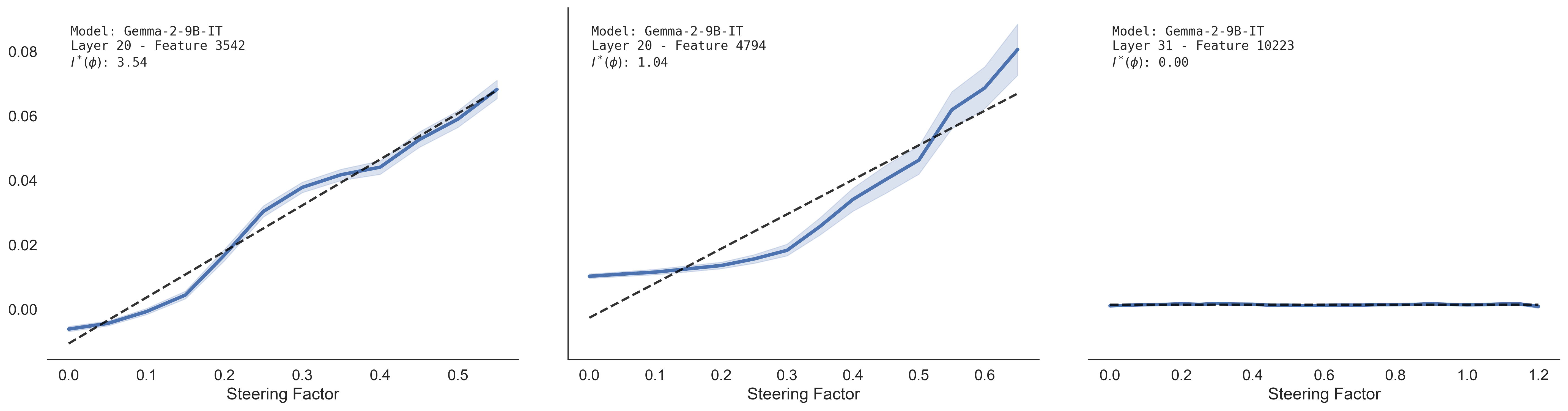}
    \caption{Examples of features intensity as measures by the mean cosine similarity for Gemma-2-9b-IT. Left: a high intensity features that follow a linear trend with a relatively high slope, making it a good candidate for causal interventions. Center: a medium intensity feature with a reversed `L' shape, where there is no response to steering at first, followed by a sudden spike, making their suitability for downstream experiments limited. Right: a low intensity feature with a constant slope despite increasing steering factors, making it completely ineffective for experimentation. Plots include only valid texts and steering factors with more than 75\% validation rate.}
    \label{fig:full_intensity_spectrum}
\end{figure*}

To ensure comparisons across different features, layers, and models, we introduce the normalized intensity score $I^*$:
\begin{equation}
    I^*(\phi) = \frac{|\hat{\beta}_1| \cdot R_I}{\text{MAE}(\bar{I}, \widehat{I})},
\end{equation}
where $\widehat{I}$ is a linear regression fit of mean intensities $\bar{I}_{\alpha}$ on steering factors $\alpha \in \mathcal{A}$, $|\hat{\beta}_1|$ is the regression coefficient, $R_I$ is the range of $\bar{I}$, and $\text{MAE}(\bar{I}, \widehat{I})$ captures the residuals.

Intuitively, this scoring rewards features that exhibit a consistent, proportional response to steering. A high score indicates that the concept intensity scales linearly with the intervention magnitude, whereas a low score penalizes features that saturate early, plateau, or otherwise deviate from a proportional linear response. See \myref{Figure}{fig:full_intensity_spectrum} for examples.

\subsection{Coherence Score}

We also wish to identify features that do not degrade the model's output quality. We prompt Gemini 2.5 Flash-Lite \citep{comanici2025gemini25pushingfrontier} and ask it to rate each steered response on a scale of 0 to 3 based on its coherence and fluency (see prompt in \myref{Appendix}{sec:apxjudgeprompt}). To compare features, we aggregate these scores into a single coherence metric
\begin{equation}
    J^*(\phi) = \frac{1}{|\mathcal{A}|} \sum_{\alpha \in \mathcal{A}} \bar{J}_{\alpha}
\end{equation}
representing the average quality,
where $\bar{J}_{\alpha}$ is the mean coherence score for steering factor $\alpha$.

\subsection{IC Score}

To select the most effective features for our experiments, we evaluate each candidate $\phi$ across the steering range $\mathcal{A}$. We combine the intensity and coherence measures into a final IC Score that ranks features that are both preferable in their activation response to steering and qualitatively stable in their generated output:
\begin{equation}
    IC(\phi) = I^*(\phi) \cdot J^*(\phi)
\end{equation}
The IC Score serves as a rigorous metric for identifying well-behaved features. This is useful for using these features as treatments in causal experiments for several reasons. First, features with higher scores allow for steering over a more extreme or spread-out range of $\alpha$, enabling more pronounced treatments while ensuring the output remains semantically coherent. Second, high coherence across a wide intensity range allows us to sample from multiple points along the curve, facilitating the study of multi-valued or continuous causal effects rather than being limited to binary interventions. Third, a high IC Score enables the ``blind" setting of steering factors to produce reasonable texts for downstream causal estimation, while a lower score would require a manual, per-case validation to identify the range of suitable steering factors. Fourth, a high score theoretically mitigates the risk of confounding. If $\alpha$ increases and $I^*(\phi)$ does not increase proportionally, it becomes more likely that adding the steering vector to the residual stream will cause an increase in other correlated concepts. \myref{Table}{tab:ic_scores_combined} shows the mean IC Score per model and layer position.

\input{tables/ic_scores_model_position_table}

\section{Causal Analysis}

Having produced texts that systematically vary in the intensity of a target feature, we now aim to estimate causal effects on downstream outcomes, such as the concept originally targeted in the hypothesis-generation step.
Specifically, we want to estimate conditional average treatment effects (CATEs) of the form in \eqref{eqn:CATE}.
In doing so, we have to address the challenge that steering may affect not only the target feature, but also other aspects of the text.
As we demonstrate in this section, controlling for such other features poses unique challenges that we address in our formal results and practical implementation based on robust causal machine learning. All proofs can be found in \myref{Appendix}{apx:proofs}.

\subsection{Causal Estimation for Feature Intensities}\label{sec:cate_base}

We now focus on estimating the effect of a specific target feature $\phi$, and we define the treatment $T_\phi$ of interest to be the (rescaled) measured intensity of that feature, $I(\nu,\phi)$.
For simplicity, we assume that feature intensity is either high ($T_\phi{=}1$) or low ($T_\phi{=}0$). We write $X$ for a full embedding of the text, which we assume contains all relevant information about a document. Beyond the concept we intervene on, texts (as represented by $X$) also contain other important semantic features (e.g., topic, tone, etc.) separate from $T_\phi$ that may be different between texts with different target intensities, which we collectively write as $X^\perp = g^\perp(X)$.

We are interested in estimating the effect of the intensity $T_\phi$ on a downstream outcome $Y$. We assume that $T_\phi$ and $X^\perp$ together capture any aspect of the text relevant for downstream outcomes, so that the outcomes can be written as a function of $T_\phi$, $X^\perp$, and some exogenous (independent) noise $\varepsilon$: $Y = f(T_\phi,X^\perp; \varepsilon)$.
In line with the notation in \myref{Section}{sec:TE-Intro}, for every unit we consider potential outcomes
\begin{align*}
    Y(1) &= f(1,X^\perp;\varepsilon),
    &
    Y(0) &= f(0,X^\perp;\varepsilon),
\end{align*}
which represent pairs of (correlated) random variables that express the downstream outcomes of a hypothetical pair of texts that only differ in the target intensity (treatment), but not otherwise.
The treatment effects of interest are averages over $Y(1) - Y(0)$.
The realized outcome is $Y = Y(T_\phi)$.
Throughout, we assume that $Y$ is scalar with $E[| Y |^2] < \infty$.

\subsection{Induced Bias and Positivity Violations}

As we aim to estimate treatment effects $Y(1) - Y(0)$ (and their averages), we face two challenges:
First, steering may affect parts of the text $X$ beyond the target concept, so the components $X^\perp$ may correlate with the target intensity (treatment) $T_\phi$, violating unconfoundedness.
Second, since the concept intensity $I(\nu,\phi)$ is measured from the text, simply controlling for the full text embedding $X$ implies that treatment $T_\phi$ is fully explained, violating positivity.

To illustrate the first challenge, assume that we aim to estimate the overall average treatment effect $\tau = E[Y(1) - Y(0)]$, but that $X^\perp$ differs between the high intensity ($T_\phi{=}1$, treated) and low intensity ($T_\phi{=}0$, control) groups. 
Then, simply taking the difference in means between the two groups generally leads to bias,
\begin{equation*}
    E[Y | T_\phi{=}1] - E[Y|T_\phi{=}0] \stackrel{\text{generally}}{\neq} \tau,
\end{equation*}
due to confounding.
One way in which we could overcome this challenge would be to assume that steering does not create any \emph{systematic} differences across groups:

\begin{proposition}[Causal identification from tight steering]
\label{prop:tightsteering}
    Assume that $X^\perp |T_\phi{=}1 \stackrel{d}{=} X^\perp |T_\phi{=}0$.
    Then $\tau = E[Y | T_\phi{=}1] - E[Y|T_\phi{=}0]$.
\end{proposition}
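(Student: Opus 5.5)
The plan is to compute each conditional mean separately and show that $E[Y \mid T_\phi{=}t] = E[Y(t)]$ for $t \in \{0,1\}$. Subtracting the two identities then gives $\tau = E[Y(1)] - E[Y(0)] = E[Y \mid T_\phi{=}1] - E[Y \mid T_\phi{=}0]$. Everything rests on the structural model of Section~\ref{sec:cate_base}:
\begin{equation*}
Y = f(T_\phi, X^\perp; \varepsilon), \qquad Y(t) = f(t, X^\perp; \varepsilon),
\end{equation*}
with $\varepsilon$ exogenous noise. I read ``exogenous (independent)'' as $\varepsilon \perp (T_\phi, X^\perp)$. I also implicitly assume $0 < P(T_\phi{=}1) < 1$, so that both conditional expectations are defined. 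Square-integrability of $Y$ makes all expectations finite, so Fubini applies.

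\textbf{Step 1 (consistency).} On the event $\{T_\phi = t\}$ we have $Y = f(t, X^\perp; \varepsilon) = Y(t)$. Hence
\begin{equation*}
E[Y \mid T_\phi{=}t] = E[f(t, X^\perp; \varepsilon) \mid T_\phi{=}t].
\end{equation*}

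\textbf{Step 2 (joint conditional law of $(X^\perp, \varepsilon)$ given $T_\phi$).} Because $\varepsilon \perp (T_\phi, X^\perp)$, the conditional law of $(X^\perp, \varepsilon)$ given $T_\phi{=}t$ is the product $P_{X^\perp \mid T_\phi = t} \otimes P_\varepsilon$.

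The tight-steering assumption says $P_{X^\perp \mid T_\phi=1} = P_{X^\perp \mid T_\phi=0}$. The marginal law of $X^\perp$ is a mixture of these two laws with weights $P(T_\phi{=}1)$ and $P(T_\phi{=}0)$, so each conditional law also equals the marginal $P_{X^\perp}$. In other words, $X^\perp \perp T_\phi$, and combining this with the independence of $\varepsilon$ gives $(X^\perp, \varepsilon) \perp T_\phi$.

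\textbf{Step 3 (conclusion).} By Fubini,
\begin{equation*}
E[f(t, X^\perp; \varepsilon) \mid T_\phi{=}t] = \iint f(t, x; e) \, dP_{X^\perp}(x) \, dP_\varepsilon(e) = E[Y(t)].
\end{equation*}
Taking the difference over $t = 1, 0$ gives the claim. Equivalently, Steps 1--3 verify unconditional unconfoundedness, $(Y(0), Y(1)) \perp T_\phi$, because $(Y(0), Y(1))$ is a measurable function of $(X^\perp, \varepsilon)$.

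\textbf{Main obstacle.} The only delicate point is Step 2. The hypothesis constrains only the marginal law of $X^\perp$ given $T_\phi$. What is actually needed is that the joint law of $(X^\perp, \varepsilon)$ does not depend on $T_\phi$, which requires that $\varepsilon$ be independent of the pair $(T_\phi, X^\perp)$ and not merely of each variable separately. I would make this reading of ``exogenous (independent) noise'' explicit at the start of the proof. If only $\varepsilon \perp X^\perp$ were available, the argument would break, since $T_\phi$, being measured from the text, could still correlate with $\varepsilon$.
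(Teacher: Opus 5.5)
Your proof is correct and is essentially the paper's argument. The paper writes $\tau = E[f(1,X^\perp;\varepsilon)] - E[f(0,X^\perp;\varepsilon)]$, replaces each term by $E[f(t,X^\perp;\varepsilon)\mid T_\phi{=}t]$ in one step justified by the tight-steering assumption and ``independence of $\varepsilon$,'' and then uses consistency; your Steps 2--3 unpack that step by showing $(X^\perp,\varepsilon)\perp T_\phi$, and reading the noise as independent of the pair $(T_\phi, X^\perp)$ is what the paper's brief justification implicitly relies on.
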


In practice, this assumption may be unrealistic: as steering creates systematic differences in the target intensity, it likely also affects related (and possibly even unrelated) textual features in systematic ways.
Instead, we may want to control for possible confounders, leading to the second challenge.
Indeed, note that one way of avoiding confounding would be to control for the text embedding $X$. However, since $X$ also perfectly captures the target intensity $T_\phi$, we have that $ P(T_\phi{=}1|X) \in \{0,1\}$ -- positivity is violated, and we lose identification of effects of $T_\phi$.

\subsection{Second-Best Controlling}
\label{sec:limitedcontrol}

Above, we have argued that the estimation of treatment effects is hindered by confounding through other changes in texts, and that a naive approach to controlling fails. 
In theory, we could address both challenges by controlling only for changes in the aspects of the text that are separate from the treatment $T_\phi$, as captured by $X^\perp$:

\begin{proposition}[Causal identification from perfect controlling]
\label{prop:perfectcontrolling}
    Assume that that $0 < P(T_\phi{=}1|X^\perp) < 1$ almost surely.
    Then $\tau = E[E[Y|T_\phi{=}1,X^\perp] - E[Y|T_\phi{=}0,X^\perp]]$.
\end{proposition}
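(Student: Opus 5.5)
The proof follows the standard backdoor/adjustment argument with $X^\perp$ as the adjustment set. The key fact is unconfoundedness given $X^\perp$. By the structural model of \myref{Section}{sec:cate_base}, $Y(t) = f(t, X^\perp; \varepsilon)$, where $\varepsilon$ is exogenous. Concretely, $\varepsilon$ is independent of $(T_\phi, X^\perp)$; this is the reading of ``exogenous (independent) noise'' I will make explicit. It follows that, conditional on $X^\perp$, the pair $(Y(0),Y(1))$ is a fixed function of $\varepsilon$ alone, so $(Y(0),Y(1)) \perp T_\phi \mid X^\perp$. I would state this first, as a short lemma, using the fact that independence of $\varepsilon$ from $(T_\phi,X^\perp)$ implies $\varepsilon \perp T_\phi \mid X^\perp$.

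The main computation comes next. Fix $t\in\{0,1\}$. The overlap assumption $0<P(T_\phi{=}1\mid X^\perp)<1$ a.s. ensures that $E[Y\mid T_\phi{=}t, X^\perp]$ is well defined almost surely, as a version of the conditional expectation on a set of positive conditional probability. Then
\begin{align*}
E[Y\mid T_\phi{=}t,X^\perp] = E[Y(t)\mid T_\phi{=}t,X^\perp] = E[Y(t)\mid X^\perp],
\end{align*}
where the first equality uses consistency $Y=Y(T_\phi)$ and the second uses the conditional independence above. Integrability is inherited from $E[|Y|^2]<\infty$. I would also assume $Y(t)$ is integrable; this is implicit if $\tau$ is to be defined. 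Taking outer expectations and applying the tower property gives $E[E[Y\mid T_\phi{=}t,X^\perp]] = E[Y(t)]$. Subtracting the $t=0$ case from the $t=1$ case yields $\tau = E[Y(1)-Y(0)]$.

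The only delicate step is measure-theoretic bookkeeping. The conditional expectation given $\{T_\phi{=}t\}$ jointly with $X^\perp$ must be defined on a set of full $X^\perp$-measure, and that is exactly what positivity guarantees. I would handle this by writing $E[Y\mid T_\phi, X^\perp]$ as a measurable function $\mu(T_\phi, X^\perp)$. Positivity then ensures that both $\mu(1,\cdot)$ and $\mu(0,\cdot)$ are pinned down $P_{X^\perp}$-a.s. The alternative identity $\mu(t,X^\perp) = E[\mathbf 1\{T_\phi=t\}Y\mid X^\perp]/P(T_\phi=t\mid X^\perp)$ makes this explicit, and the result follows.
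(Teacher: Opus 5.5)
Your proof is correct and is essentially the paper's argument read in reverse. The paper starts from $E[Y(1)-Y(0)]$, conditions on $X^\perp$ via the tower property, replaces $E[f(t,X^\perp;\varepsilon)\mid X^\perp]$ by $E[f(t,X^\perp;\varepsilon)\mid T_\phi{=}t,X^\perp]$ using positivity and independence of $\varepsilon$, and finally substitutes $Y=f(T_\phi,X^\perp;\varepsilon)$; these are exactly your ingredients. Your explicit reading $\varepsilon\perp(T_\phi,X^\perp)$, the resulting conditional-independence lemma, and your integrability and well-definedness remarks make precise what the paper's step $(\star)$ leaves implicit.
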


Implementing an estimator for $\tau$ in practice based on this result would require separating out the component $X^\perp$ that is orthogonal to the intensity (treatment) $T_\phi$.
However, we do not generally measure this component $X^\perp$ separately, and only have access to the full embedding $X$ and the implied treatment $T_\phi$.
Instead, we now assume that we have a possibly imperfect vector $\tilde{X} = \tilde{g}(X)$ of control variables available that balances the desire to control for $X^\perp$ with the requirement to preserve variation in treatment $T_\phi$:
\vspace{5pt}
\begin{assumption}[Residual treatment variation]
\label{asm:residualtreatment}
    There is an $\eta > 0$ such that $\eta < P(T_\phi{=}1|\tilde{X}) < 1 - \eta$ almost surely.
\end{assumption}

\vspace{5pt}
\begin{assumption}[Residual embedding variation]
\label{asm:residualembedding}
    Write $\nu_1(\tilde{X}), \nu_0(\tilde{X})$ for the distributions of $X^\perp | T_\phi{=}1, X^\perp |T = 0$, respectively, conditional on $\tilde{X}$.
    We assume that there is a $\delta > 0$ and a metric $d$ on the support of $X^\perp$ such that
    for the 1-Wasserstein metric $W_1^{(d)}$
    we have that $W_1^{(d)}(\nu_1(\tilde{X}), \nu_0(\tilde{X})) \leq \delta$ almost surely.
\end{assumption}

The first assumption requires sufficient variation in intensity $T_\phi$ after controlling for $\tilde{X}$ and implies that $\tilde{X}$ cannot capture too much of the full embedding $X$.
The second assumption says that the remaining \emph{systematic} variation beyond $T_\phi$ in the embedding is not too large after controlling for $\tilde{X}$.
It would be fulfilled in the two cases from above:
If steering is perfect (\myref{Proposition}{prop:tightsteering}), then $W_1^{(d)}(\nu_1(\tilde{X}), \nu_0(\tilde{X})) = 0$ a.s.\ for any choice of control variable.
If $\tilde{X} = X^\perp$ (\myref{Proposition}{prop:perfectcontrolling}), then likewise $W_1^{(d)}(\nu_1(\tilde{X}), \nu_0(\tilde{X})) = 0$ a.s.\ since there is no remaining variation in $X^\perp(1), X^\perp(0)$.

We now consider how well we can estimate treatment effects by controlling for the (possibly imperfect) proxy $\tilde{X}$ of $X^\perp$.
We assume that our goal is to approximate conditional average treatment effects (CATEs) 
    $\tau(\tilde{x}) = E[Y(1) - Y(0)|\tilde{X} {=} \tilde{x}]$
by estimating
\begin{align}
\label{eqn:feasiblecate}
    \tilde{\tau}(\tilde{x}) = E[Y|T_\phi{=}1, \tilde{X} {=} \tilde{x}] - E[Y|T_\phi{=}0, \tilde{X} {=} \tilde{x}]
\end{align}
from the steered data.
Writing
\begin{align*}
    \mu_1(x^\perp) &= E[Y(1)|X^\perp {=} x^\perp],
    \\
    \mu_0(x^\perp) &= E[Y(0)|X^\perp {=} x^\perp],
\end{align*}
we assume that $\mu_1, \mu_0$ are not too sensitive to small variations in $x^\perp$:

\begin{assumption}[Limited sensitivity to off-feature variation]
\label{asm:limitedsensitivity}
    There is an $L > 0$ such that $\mu_1, \mu_0$ are Lipschitz continuous with respect to the metric $d$ from \myref{Assumption}{asm:residualembedding} and the Lipschitz constant $L$.
\end{assumption}

We obtain a bound for the mistake we make when we approximate $\tau(\tilde{x})$ in terms of $\tilde{\tau}(\tilde{x})$:

\begin{theorem}[Bias bound for imperfect controls]
\label{thm:biasbound}
    Assume that Assumptions~\ref{asm:residualtreatment}, \ref{asm:residualembedding}, and \ref{asm:limitedsensitivity} hold. Then almost surely
    \begin{equation*}
        | \tau(\tilde{x}) - \tilde{\tau}(\tilde{x}) | \leq 2 L \delta.
    \end{equation*}
\end{theorem}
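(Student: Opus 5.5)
The plan is to write both $\tau(\tilde{x})$ and $\tilde{\tau}(\tilde{x})$ as integrals of $\mu_1$ and $\mu_0$ against the conditional laws $\nu_1(\tilde{x})$ and $\nu_0(\tilde{x})$ of $X^\perp$. The difference then reduces to terms that the Kantorovich--Rubinstein duality controls via Assumptions~\ref{asm:residualembedding} and \ref{asm:limitedsensitivity}.

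\textbf{Step 1: structural facts.} Since $Y = f(T_\phi, X^\perp;\varepsilon)$ with $\varepsilon$ exogenous, and $\tilde{X} = \tilde{g}(X)$, I will assume $\varepsilon$ is independent of $(T_\phi, X^\perp, \tilde{X})$. This is the implicit reading of ``exogenous (independent) noise''. It gives
\[
E[Y \mid T_\phi{=}t, X^\perp, \tilde{X}] = E[Y(t)\mid X^\perp] = \mu_t(X^\perp).
\]
Assumption~\ref{asm:residualtreatment} guarantees that the conditioning events $\{T_\phi{=}t, \tilde{X}{=}\tilde{x}\}$ have positive conditional probability, so $\tilde{\tau}$ is well defined. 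By the tower property,
\[
E[Y\mid T_\phi{=}t, \tilde{X}{=}\tilde{x}] = \int \mu_t \, d\nu_t(\tilde{x}),
\quad\text{so}\quad
\tilde{\tau}(\tilde{x}) = \int \mu_1\, d\nu_1 - \int \mu_0 \, d\nu_0 .
\]

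\textbf{Step 2: the target quantity.} Conditionally on $\tilde{X}=\tilde{x}$, the law of $X^\perp$ is the mixture $\nu(\tilde{x}) = p\,\nu_1 + (1-p)\,\nu_0$, where $p = P(T_\phi{=}1\mid \tilde{X}{=}\tilde{x})$. Hence
\[
\tau(\tilde{x}) = \int (\mu_1-\mu_0)\, d\nu = p\!\int(\mu_1-\mu_0)d\nu_1 + (1-p)\!\int(\mu_1-\mu_0)d\nu_0 .
\]

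\textbf{Step 3: subtract and bound.} Subtracting Step 1 from Step 2 and regrouping gives
\[
\tau - \tilde{\tau} = (1-p)\Big[\int \mu_1 d\nu_0 - \int \mu_1 d\nu_1\Big] + p\Big[\int \mu_0 d\nu_0 - \int\mu_0 d\nu_1\Big].
\]
Each bracket is the integral of an $L$-Lipschitz function, by Assumption~\ref{asm:limitedsensitivity}, against the signed measure $\nu_0-\nu_1$. Kantorovich--Rubinstein duality therefore bounds each bracket by $L\,W_1^{(d)}(\nu_1,\nu_0) \le L\delta$, using Assumption~\ref{asm:residualembedding}. Combining, $|\tau-\tilde{\tau}| \le (1-p)L\delta + pL\delta = L\delta \le 2L\delta$. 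A cruder triangle inequality, which drops the weights and bounds each of the two brackets by $L\delta$, yields the stated $2L\delta$ directly.

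\textbf{Main obstacle: measure-theoretic bookkeeping.} Two points need care.
\begin{itemize}
\item Justifying $E[Y\mid T_\phi, X^\perp, \tilde{X}] = \mu_{T_\phi}(X^\perp)$ requires the independence of $\varepsilon$ from $(T_\phi, X^\perp, \tilde{X})$ noted in Step 1.
\item Using the duality requires finite first moments, $\int d(x_0,\cdot)\,d\nu_t < \infty$, so that the Lipschitz $\mu_t$ are integrable. I would take this as implicit in $W_1^{(d)}(\nu_1,\nu_0) \le \delta < \infty$, together with $E|Y|^2<\infty$.
\end{itemize}
All statements hold for almost every $\tilde{x}$, which gives the ``almost surely'' qualifier.
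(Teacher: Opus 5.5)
Your proposal is correct and is essentially the paper's proof. Both write $\tilde{\tau}(\tilde{x})$ as $\int \mu_1\,d\nu_1 - \int \mu_0\,d\nu_0$ and $\tau(\tilde{x})$ via the mixture $p\,\nu_1 + (1-p)\,\nu_0$, reduce the gap to two brackets weighted by $1-p$ and $p$, and bound each bracket by $L\,W_1^{(d)}(\nu_1,\nu_0) \le L\delta$, arriving at the sharper bound $L\delta$. The only difference is cosmetic: where you cite Kantorovich--Rubinstein duality, the paper proves directly the one inequality it needs, $|E_\gamma[\mu_t(X^\perp_1)] - E_\gamma[\mu_t(X^\perp_0)]| \le L\,E_\gamma[d(X^\perp_1, X^\perp_0)]$ for an arbitrary coupling $\gamma$, and then takes the infimum; that argument does not need the finite-first-moment condition you flag for the full duality.
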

\vspace{-2pt}
The same bias bound applies if we approximate the overall treatment effect $\tau = E[Y(1) - Y(0)]$ by controlling for $\tilde{X}$.
Hence, we can still recover a suitable estimator based on transformations $\tilde{X}$ that capture confounding features well enough, while not perfectly predicting $T_\phi$.

\subsection{Robust Estimation}
\label{sec:estimation}

Above, we have argued that an approximation for $\tau(\tilde{x})$ with bounded bias may be achieved by controlling for appropriate text features $\tilde{X}$.
We implement this controlling procedure based on robust causal machine learning \citep[cf.][]{chernozhukov_dml} using the R learner \citep{nie2021quasi} as implemented  in EconML \citep{econml}.
Writing $\tilde{\mu}(\tilde{x}) = E[Y |\tilde{X}{=}\tilde{x}]$ (outcome prediction) and $\tilde{\pi}(\tilde{x}) = E[T_\phi |\tilde{X}{=}\tilde{x}]$ (propensity score), our target estimand $\tilde{\tau}(\cdot)$ defined in \eqref{eqn:feasiblecate} solves the least-squares problem
\begin{align*}
    \min_{h(\cdot)} E[((Y - \tilde{\mu}(\tilde{X})) - h(\tilde{X}) \: (T_\phi - \tilde{\pi}(\tilde{X})))^2]
\end{align*}
Motivated by this identity,
we predict $Y$ and $T_\phi$ from $\tilde{X}$ on our sample $(Y_i,T_{\phi i},\tilde{X}_i)_{i=1}^n$ using cross-fitting to obtain out-of-fold estimates $\hat{\mu}_{-i}(\tilde{X}_i), \hat{\pi}_{-i}(\tilde{X}_i)$ of $\tilde{\mu}(\tilde{X}_i), \tilde{\pi}(\tilde{X}_i)$. We then solve the sample analogue
\begin{align*}
    \min_{h(\cdot)} \frac{1}{n} \sum_{i=1}^n ((Y_i - \hat{\mu}_{-i}(\tilde{X}_i)) - h(\tilde{X}_{i}) \: (T_{\phi i} - \hat{\pi}_{-i}(\tilde{X}_i)))^2
\end{align*}
over regularized linear functions to estimate the approximate CATEs $\tilde{\tau}(\cdot)$.
This procedure is robust: it leads to a high-quality estimation of $\tilde{\tau}(\cdot)$ even if $\tilde{\mu}, \tilde{\pi}$ are both estimated only at a rate $o(n^{-1/4})$ \citep{nie2021quasi}.

\section{Semi-Synthetic Empirical Case Study}\label{sec:simul}

We validate our steering and causal estimation pipeline in a series of semi-synthetic simulations. We sample 300 texts from each of the real-world datasets from \myref{Section}{sec:Datasets}, construct $\Phi_+$ with three features per layer, and run 25 steering intervention for each text with equally spaced steering factors $\alpha \in [0, 1.2]$. This procedure produces $7,500$ texts per feature and an overall of $607,500$ generated texts.

We use the LLM-as-judge coherence measure to remove all steered texts the judge flagged as invalid. For the remaining set, we construct our binary treatment as whether a steered text has a measured feature intensity in the top ($T_\phi{=}1)$ or bottom ($T_\phi{=}0)$ quintile of the given sample distribution of $I(\nu,\phi)$. Among these selected extremes, we remove all steered texts originating from base texts that do not produce at least one steered text in each treatment group. This allows us to ensure that there are no systematic differences between the two groups arising from differences in the corresponding sets of base texts. We then note the number of steered texts produced by each base text in each treatment group, and use the inverse of this count as the weight assigned to the corresponding steered text in CATE estimation. While we run our method on a wide selection of SAE features, we remove features with a very low IC score from our analysis. For an overview of the results for those features which do not meet this criterion, see \myref{Appendix}{apx:spec}.

\subsection{Embeddings and Treatment Prediction}
We produce embeddings $X_i$ for each document via EmbeddingGemma-300M \citep{vera2025embeddinggemmapowerfullightweighttext}, all-mpnet-base-v2 \citep{song2020mpnetmaskedpermutedpretraining}, or all-MiniLM-L6-v2 \citep{wang2020minilmdeepselfattentiondistillation}. The resulting embedding vectors are then stacked row-wise into a matrix for the given data sample. We begin by using PCA to rotate this embedding matrix, producing the matrix $\mathbf{X}$ with $n$ rows, one per document.
\footnote{When performing PCA, we keep the maximum number of dimensions (given by the minimum of the number of steered texts and the number of embedding dimensions).}
This will be our default representation of text for the remainder of our simulation.

\subsection{Proposed Residualization Strategies}\label{sec:resid}

Our theoretical results in \myref{Section}{sec:limitedcontrol} show that we can improve causal estimation on downstream outcomes by constructing control variables $\tilde{X}$ based on embeddings $X$ that capture confounding features well enough, while also allowing for substantial residual variation in $T_\phi$.
In order to implement this guidance, we construct residualized versions of the full text embeddings. 
Specifically, we propose two strategies for residualization of our produced embeddings (e.g., producing $\mathbf{\widetilde{X}}$ from $\mathbf{X}$). 

\textbf{Dimension-by-dimension:} We separately residualize each dimension (column) of our embedding matrix $\mathbf{X}$: let $\mathbf{x}_j$ denote the $j$-th column of $\mathbf{X}$. We remove treatment information encoded in $\mathbf{x}_j$ by fitting a predictive model to predict $\mathbf{x}_j$ from $\mathbf{T}_\phi$ (which is the vector of treatment indicators $T_{\phi i}$), yielding a predicted value $\mathbf{\hat{x}}_j$\footnote{The model fitting and prediction is accomplished via cross-fitting, with all predictions generated out-of-sample}. We then construct our residualized version $\mathbf{\tilde{x}}_j = \mathbf{x}_j-\mathbf{\hat{x}}_j$. Proceeding iteratively across columns of $\mathbf{X}$ thus allows us to construct the embedding matrix $\mathbf{\widetilde{X}}$ with treatment-related information removed.

\textbf{Dropping first principal component:} We perform residualization by simply dropping the first dimension of our rotated embeddings (the first principal component), e.g. $\mathbf{\widetilde{X}}$ is simply $\mathbf{X}$ with the first column removed.

\subsection{Residualization Results}\label{sec:resid_prac}

We can evaluate how much predictive power our (rotated) embeddings contain with respect to $T_\phi$ by measuring the out-of-sample classification accuracy for $\mathbf{X}$ and $\mathbf{\widetilde{X}}$. The results are shown in \myref{Figure}{fig:resid_acc}. 
We see that using raw $\mathbf{X}$ almost perfectly predicts $T_\phi$, while both of our residualization strategies show lower accuracy, as expected.
We also observe that the first dimension of our rotated embeddings, corresponding to the first principal component, carries substantive predictive power for treatment. 

\begin{figure}[h]
    \centering
    \includegraphics[width=.8\linewidth]{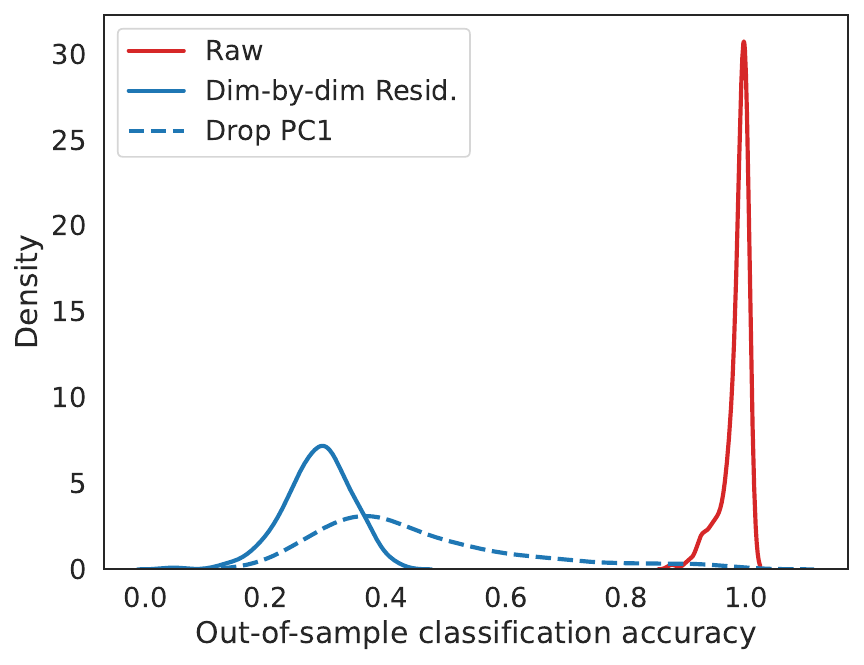}
    \caption{Distributions of the out-of-sample predictive accuracy of treatment between raw and residualized embeddings, where residualization is conducted dimension-by-dimension (solid) and by dropping the first principle component (dashed), taken over all tested LLMs, datasets, SAE features, and embedding models.}
    \label{fig:resid_acc}
\end{figure}

We can further analyze the concentration of treatment information in the first principal component by examining the performance gap in predictive accuracy of treatment between raw and residualized embeddings, component by component. \myref{Figure}{fig:violin_panel} shows that the effects of residualization are heavily localized to the first principal component, while having a negligible effect on all other components. \myref{Figure}{fig:corr_panel} goes further and shows that there is a clear positive correlation between the IC score and the accuracy gap on the first principal component. This suggests that SAE features that are more effective for steering also result in concentration of treatment information along the primary axis of embedding variation.

\begin{figure}[h!]
    \centering
    \begin{subfigure}[c]{0.85\linewidth}
        \centering
        \hspace{-.7cm}
        \includegraphics[width=\linewidth]{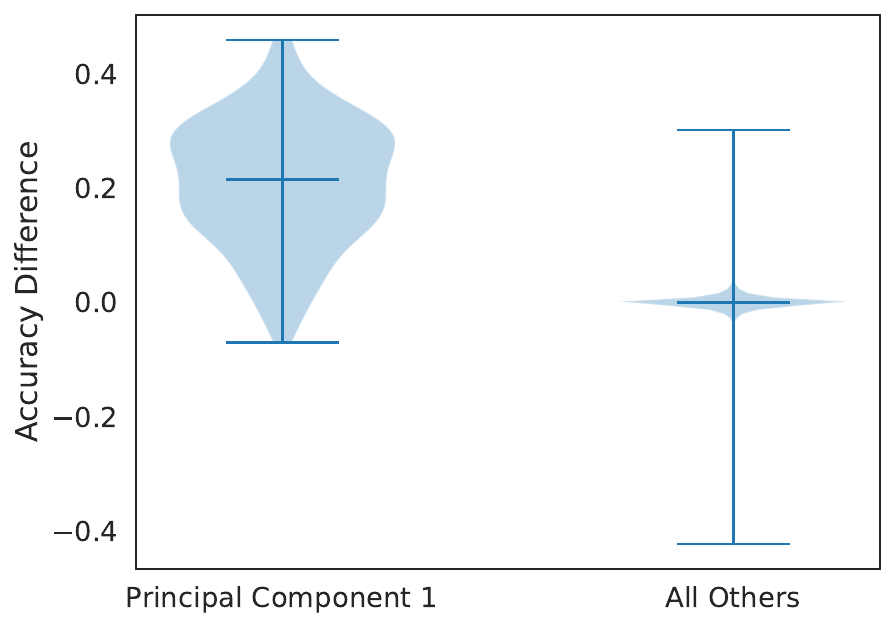}
        \caption{}
        \label{fig:violin_panel}
        \vspace{5pt}
    \end{subfigure}
    \begin{subfigure}[c]{0.8\linewidth}
        \includegraphics[width=\linewidth]{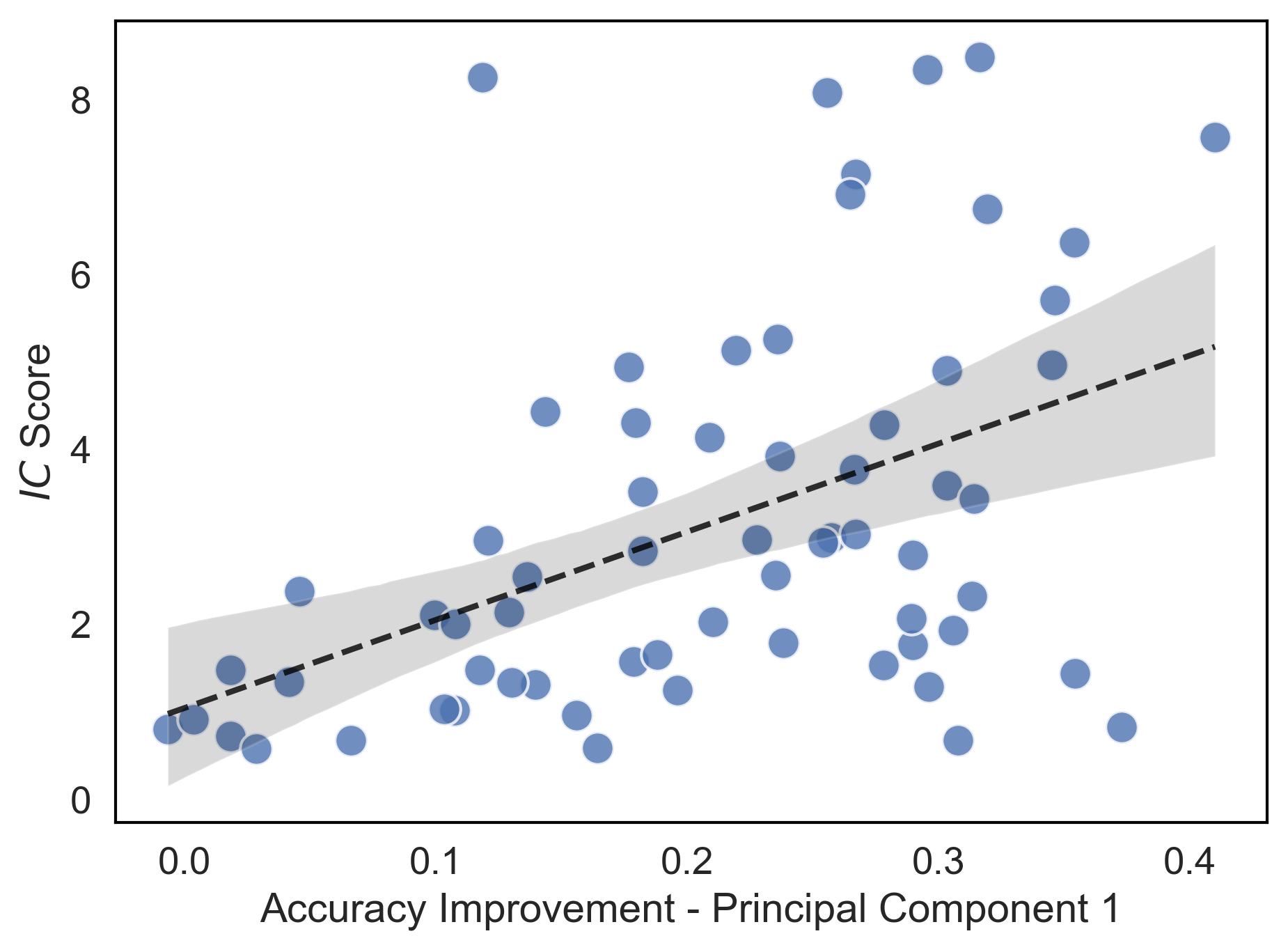}
        \caption{}
        \label{fig:corr_panel}
        \vspace{0pt}
    \end{subfigure}
    \hfill
    
    \caption{(a) Distributions of the performance gap in predictive accuracy between raw and residualized embeddings across PCA components. (b) IC scores and accuracy improvements in the first principal component (correlation $r=0.463$).}
    \label{fig:combined_pca_analysis}
\end{figure}

Taken together, these results demonstrate that when an SAE feature is highly effective for steering, dropping the first principal component is likely a sufficient strategy for residualization. For features with lower IC scores, dimension-by-dimension residualization will likely yield better results due to decreased concentration of treatment information in the first principal component. Dimension-by-dimension residualization will be our primary specification in our simulation study; results using residualization via dropping the first principal component can be found in \myref{Appendix}{apx:spec}.

\subsection{Simulated CATE Estimation}
We construct synthetic outcome variables given $\mathbf{\widetilde{X}}$ from a wide variety of functional form specifications, with details provided in \myref{Appendix}{apx:spec}. CATE estimation is performed via robust causal machine learning following \myref{Section}{sec:estimation} with nuisance functions estimated via random forests \citep{breiman_rf}. Residualization is similarly achieved via random forests to produce $\mathbf{\hat{x}}_j$. We simulate outcomes $y_i$ across a variety of functional forms, embedding models, LLMs, SAE features, and datasets. For all simulations, the true average treatment effect (ATE) is 5. \myref{Figure}{fig:dgp_compare} shows a summary of the results.

\begin{figure}[h]
    \centering
    \includegraphics[width=0.8\linewidth]{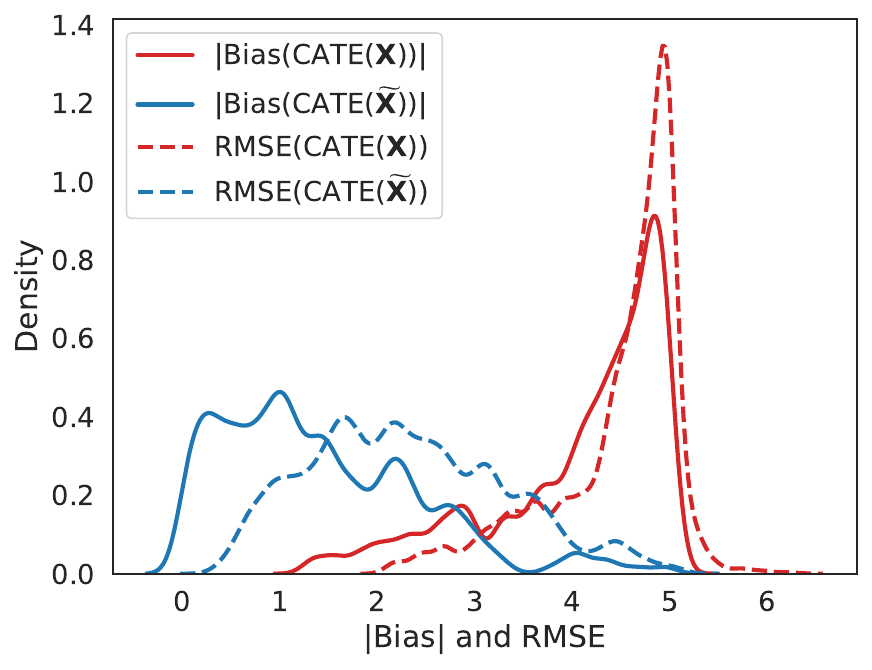}
    \caption{Distribution of magnitude of bias (solid) and RMSE (dashed) for raw (red) and residualized (blue) covariates, across all tested DGP functional forms, SAE features, LLMs, datasets, and embedding models. Residualization performed via dimension-by-dimension strategy.}
    \label{fig:dgp_compare}
\end{figure}

These results demonstrate a substantial reduction in both bias and RMSE following the application of our residualization approach. This marked improvement validates the efficacy of our methodology in mitigating the bias inherent in naive control strategies that utilize the full embeddings. Furthermore, these findings are consistent with our theoretical results, confirming that removing the treatment information is essential for robust causal estimation in such settings.

\section{Conclusion}
Advances in text generation via LLMs and Sparse Autoencoders (SAEs) offer a transformative opportunity for social science research. However, the utility of these models has been limited by the difficulty of identifying optimal steering features and the bias caused by the entanglment of treatments and covariate information. We address these fundamental challenges by introducing a comprehensive pipeline for hypothesis generation from labeled data, quasi-counterfactual text generation and a novel covariate residualization methodology for causal effect estimation. We demonstrate both theoretically and empirically that we can isolate treatment effects and mitigate the bias that arise in text-as-treatment settings. Our results provide a rigorous new foundation for experimentation with latent treatments, shifting the paradigm from hand-crafted techniques to formal and controllable causal experimentation.

\newpage
\section*{Impact Statement}
This paper presents work whose goal is to provide social scientists with a new methodology for experimentation and causal effect estimation in text-as-treatment settings. We believe our work extends the existing toolkit available to researchers and can advance the understanding of causal relationships in fields such as economics, political science, business, and marketing. While the development of controlled text generation and causal estimation methods carries broader societal implications for areas such as personalization and targeting, we feel there are no specific consequences that warrant further highlighting here beyond the general advancement of machine learning and its application to social science research.

\bibliography{main}
\bibliographystyle{icml2026}

\newpage
\appendix
\onecolumn

\section{Hypothesis Generation - Candidate Feature Filtering}
\label{sec:candidatefiltering}

Before fitting linear probes on the SAE representations, a necessary preliminary step excludes spurious features. Specifically, we slightly adapt the preprocessing steps in \citet{zheng2025model} and exclude features that are:
\begin{itemize}
    \item Syntactic or formatting-related: We use open-source descriptions available on Neuronpedia \cite{neuronpedia} and search for SAE feature descriptions containing keywords such as ”BOS”, ”punctuation”, ”code”, ”math”, etc.
    \item Training set activations: Features that activate in more than 1\% of the training data.
    \item Corpus activations: Features that activate in more than 70\% of the corpus data.
    \item Variance: Features with zero variance.
\end{itemize}

\section{Steering}
\label{sec:appendixsteering}

\subsection{Steering Implementation Details}

Recall that equation \ref{eq:steeredacts} defines the steering operation as $z'_{\phi} = z_{\phi} + \alpha \cdot \|a\|_2$ and equation \ref{eq:reconstructed} defines the reconstructed output as $a' = W_{dec}z' + b_{dec} + \epsilon_a$.
Note that this requires computing one encoding and two decodings for each steering operation, once for calculating the error terms and second for the reconstructed output. Our implementation utilizes the distributive property of the linear decoder and simplifies the steering to a direct additive intervention:

\begin{equation}
    a' = W_{dec}(z + s \cdot \mathbf{1}_\phi) + b_{dec} + (a - (W_{dec}z + b_{dec}))
\end{equation}

By expanding the terms, we observe that the original reconstruction $W_{dec}z + b_{dec}$ and its negative counterpart from the error term $\epsilon_a$ cancel out:

\begin{align*}
    a' &= W_{dec}z + b_{dec} + W_{dec}(s \cdot \mathbf{1}_\phi) + a - W_{dec}z - b_{dec} \\
    &= a + s \cdot W_{dec}[\phi]
\end{align*}

where $s = \alpha \cdot \|a\|_2$ represents the adaptive steering magnitude and $W_{dec}[\phi]$ is the specific column of the decoder matrix corresponding to feature $\phi$. This equivalence shows that we bypass the need to compute the full SAE encoding and decoding, reducing the computational overhead.

\newpage
\subsection{Prompts}
\label{sec:steeringprompts}

\input{prompts/steering_prompt_ca_speech}
\input{prompts/steering_prompt_political_ads}
\input{prompts/steering_prompt_uvt_populism}

\newpage
\subsection{Steered Text Examples}
\input{tables/qwen_steering_example_table}

\input{tables/gemma_steering_example_table}

\newpage
\section{LLM-as-Judge}
\label{sec:apxjudgeprompt}
\input{prompts/coherence_llm_judge_prompt}

\newpage

\section{Proofs of Theoretical Results}\label{apx:proofs}

\begin{proof}[Proof of \myref{Proposition}{prop:tightsteering}]
    We have that
    \begin{align*}
        \tau = E[Y(1) - Y(0)]
        &=
        E[f(1,X^\perp;\varepsilon) - f(0,X^\perp;\varepsilon)]
        \\
        &\stackrel{\mathclap{(\star)}}{=}
        E[f(1,X^\perp;\varepsilon)|T_\phi=1] - E[f(0,X^\perp;\varepsilon)|T_\phi=0]
        \\
        &=
        E[f(T_\phi,X^\perp;\varepsilon)|T_\phi=1] - E[f(T_\phi,X^\perp;\varepsilon)|T_\phi=0] \\
        &=
        E[Y|T_\phi=1] - E[Y|T_\phi=0],
    \end{align*}
    where in $(\star)$ we have used the assumption stated in the proposition as well as independence of $\varepsilon$.
\end{proof}

\begin{proof}[Proof of \myref{Proposition}{prop:perfectcontrolling}]
   We have that
    \begin{align*}
        \tau = E[Y(1) - Y(0)]
        &=
        E[f(1,X^\perp;\varepsilon) - f(0,X^\perp;\varepsilon)]
        \\
        &=
        E[E[f(1,X^\perp;\varepsilon)|X^\perp] - E[f(0,X^\perp;\varepsilon)|X^\perp]]
        \\
        &\stackrel{\mathclap{(\star)}}{=}
        E[E[f(1,X^\perp;\varepsilon)|T_\phi=1, X^\perp] - E[f(0,X^\perp;\varepsilon)|T_\phi=0, X^\perp]]
        \\
        &=
        E[E[f(T_\phi,X^\perp;\varepsilon)|T_\phi=1, X^\perp] - E[f(T_\phi,X^\perp;\varepsilon)|T_\phi=0, X^\perp]] \\
        &=
        E[E[Y|T_\phi=1, X^\perp] - E[Y|T_\phi=0, X^\perp]],
    \end{align*}
        where for $(\star)$ we have again used the assumption stated in the proposition as well as independence of $\varepsilon$.
\end{proof}

\begin{proof}[Proof of \myref{Theorem}{thm:biasbound}]
 We have that
 \begin{align*}
    \tau(\tilde{x})
    &=
    E[Y(1) - Y(0)|\tilde{X}=\tilde{x}]
    =
    E[f(1,X^\perp;\varepsilon) - f(0,X^\perp;\varepsilon)|\tilde{X}=\tilde{x}]
    \\
    &=
    E[E[f(1,X^\perp;\varepsilon)|X^\perp] - E[f(0,X^\perp;\varepsilon)|X^\perp]|\tilde{X}=\tilde{x}]
    \\
    &=
    E[E[Y(1)|X^\perp] - E[Y(0)|X^\perp]|\tilde{X}=\tilde{x}]
    =
    E[\mu_1(X^\perp)|\tilde{X}=\tilde{x}] - E[\mu_0(X^\perp)|\tilde{X}=\tilde{x}]
 \end{align*}
 and likewise
 \begin{align*}
    \tilde{\tau}(\tilde{x})
    &=
    E[Y|T_\phi=1, \tilde{X}=\tilde{x}] - E[Y|T_\phi=0, \tilde{X}=\tilde{x}]
    \\
    &=
    E[f(T_\phi,X^\perp;\varepsilon)|T_\phi=1, \tilde{X}=\tilde{x}] - E[f(T_\phi,X^\perp;\varepsilon)|T_\phi=0, \tilde{X}=\tilde{x}]
    \\
   &=
    E[E[f(1,X^\perp;\varepsilon)|X^\perp]|T_\phi=1, \tilde{X}=\tilde{x}] - E[E[f(0,X^\perp;\varepsilon)|X^\perp]|T_\phi=0, \tilde{X}=\tilde{x}]
    \\
    &=
    E[E[Y(1)|X^\perp]|T_\phi=1, \tilde{X}=\tilde{x}] - E[E[Y(0)|X^\perp]|T_\phi=0, \tilde{X}=\tilde{x}]
    \\
    &=
     E[\mu_1(X^\perp)|T_\phi=1, \tilde{X}=\tilde{x}] - E[\mu_0(X^\perp)|T_\phi=0, \tilde{X}=\tilde{x}],
 \end{align*}
 where all statements here and below hold almost surely.
 In particular,
 \begin{align*}
     |\tilde{\tau}(\tilde{x}) - \tau(\tilde{x})|
     \leq
     |E[\mu_1(X^\perp)|\tilde{X}{=}\tilde{x}] - E[\mu_1(X^\perp)|T_\phi{=}1, \tilde{X}{=}\tilde{x}]|
     +
     |E[\mu_0(X^\perp)|\tilde{X}{=}\tilde{x}] - E[\mu_0(X^\perp)|T_\phi{=}0, \tilde{X}{=}\tilde{x}]|.
 \end{align*}
 For a given $t \in \{0,1\}$,
\begin{align*}
    &E[\mu_t(X^\perp)|\tilde{X}{=}\tilde{x}] - E[\mu_t(X^\perp)|T_\phi{=}t, \tilde{X}{=}\tilde{x}]
    \\
    &=
    E[\mu_t(X^\perp)|T_\phi{=}t, \tilde{X}{=}\tilde{x}]
    \: P(T_\phi{=}t | \tilde{X}{=}\tilde{x})
    +
    E[\mu_t(X^\perp)|T_\phi{\neq}t, \tilde{X}{=}\tilde{x}]
    \: (1 - P(T_\phi{=}t | \tilde{X}{=}\tilde{x}))
    \\
    &\phantom{=}
    -
    E[\mu_t(X^\perp)|T_\phi{=}t, \tilde{X}{=}\tilde{x}]
    \\
    &=
    (1 - P(T_\phi{=}t | \tilde{X}{=}\tilde{x}))
    (E[\mu_t(X^\perp)|T_\phi{=}t, \tilde{X}{=}\tilde{x}] - E[\mu_t(X^\perp)|T_\phi{\neq}t, \tilde{X}{=}\tilde{x}]).
\end{align*}
Therefore,
 \begin{align*}
     |\tilde{\tau}(\tilde{x}) - \tau(\tilde{x})|
     \leq
     &P(T_\phi{=}0 | \tilde{X}{=}\tilde{x}) \: |E[\mu_1(X^\perp)|T_\phi{=}1, \tilde{X}{=}\tilde{x}] - E[\mu_1(X^\perp)|T_\phi{=}0, \tilde{X}{=}\tilde{x}]|
     \\
     &+
     P(T_\phi{=}1 | \tilde{X}{=}\tilde{x}) \: |E[\mu_0(X^\perp)|T_\phi{=}1, \tilde{X}{=}\tilde{x}] - E[\mu_0(X^\perp)|T_\phi{=}0, \tilde{X}{=}\tilde{x}]|.
\end{align*}
Let $(X^\perp_1, X^\perp_0) \sim \gamma \in \Gamma(\nu_1(\tilde{x}),\nu_0(\tilde{x}))$, where $\Gamma(\nu_1(\tilde{x}),\nu_0(\tilde{x}))$ are the couplings of $\nu_1(\tilde{x}),\nu_0(\tilde{x})$.
(In particular, $X^\perp_1 \sim \nu_1(\tilde{x})$, $X^\perp_0 \sim \nu_0(\tilde{x})$.)
Then, for this $\gamma$,
\begin{align*}
    &|E[\mu_t(X^\perp)|T_\phi{=}1, \tilde{X}{=}\tilde{x}] - E[\mu_t(X^\perp)|T_\phi{=}0, \tilde{X}{=}\tilde{x}]|
    =
    |E_\gamma[\mu_t(X^\perp_1)] - E_\gamma[\mu_t(X^\perp_0)]|
    \\
    &\leq
    E_\gamma[|\mu_t(X^\perp_1) - \mu_t(X^\perp_0)|]
    \leq L \: E_\gamma[d(X^\perp_1, X^\perp_0)]
\end{align*}
by \myref{Assumption}{asm:limitedsensitivity}.
In particular, 
\begin{align*}
    &\left|E[\mu_t(X^\perp)|T_\phi{=}1, \tilde{X}{=}\tilde{x}] - E[\mu_t(X^\perp)|T_\phi{=}0, \tilde{X}{=}\tilde{x}]\right| 
    \\
    &\leq L \: \inf_{\gamma \in \Gamma(\nu_1(\tilde{x}),\nu_0(\tilde{x}))} E_\gamma[d(X^\perp_1, X^\perp_0)] = W_1^{(d)}(\nu_1(\tilde{x}),\nu_0(\tilde{x}))
    \leq
    L \: \delta
\end{align*}
by \myref{Assumption}{asm:residualembedding},
so
\begin{align*}
    |\tilde{\tau}(\tilde{x}) - \tau(\tilde{x})|
     \leq
     P(T_\phi{=}0 | \tilde{X}{=}\tilde{x}) \: L \: \delta
     +
     P(T_\phi{=}1 | \tilde{X}{=}\tilde{x})  \: L \: \delta
     = L \: \delta
\end{align*}
almost surely.
\end{proof}

\section{Empirical Specifications and Robustness Checks}\label{apx:spec}

\subsection{Data Generating Process}
We randomly partition the columns of $\mathbf{\widetilde{X}}$ into four sets, $\mathbf{\widetilde{X}} = [\mathbf{\widetilde{X}}_1, \mathbf{\widetilde{X}}_2, \mathbf{\widetilde{X}}_3, \mathbf{\widetilde{X}}_4]$ and synthesize our outcome $\mathbf{y}$ as follows:
\begin{align*}
    \tau(\tilde{x}_i) &= \Gamma + g_1\left(\frac{1}{m}\tilde{x}_{i,1}^\top\mathbf{1_m}\right) + g_2\left(\frac{1}{m}\tilde{x}_{i,2}^\top\mathbf{1_m}\right) \\
    \mu_0(\tilde{x}_i) &= g_3\left(\frac{1}{m}\tilde{x}_{i,3}^\top\mathbf{1_m}\right) + g_4\left(\frac{1}{m}\tilde{x}_{i,4}^\top\mathbf{1_m}\right)\\
    y_i &= \mu_0(\tilde{x}_i) + \tau(\tilde{x}_i)T_{\phi,i} + \epsilon_i \quad, \quad \epsilon_i \overset{\mathrm{iid}}{\sim} \mathcal{N}(0, \sigma^2)
\end{align*}
Where $i$ denotes the document, $x_{i,j}$ denotes the vector of elements in row $i$ of the matrix $\mathbf{\widetilde{X}}$ corresponding to columns in the subset $\mathbf{\widetilde{X}}_j$, and $\mathbf{1_m}$ denotes the vector of ones of length $m$. Note that by this construction, the true ATE is $\Gamma$ in this simulation. This construction allows for generic choices of (possibly nonlinear) functions $g_1, g_2, g_3, g_4$. 

We demonstrate insensitivity of our key results to the choice of these functional forms by randomly selecting each from a set of linear and nonlinear functions. In particular, we consider a set of four scalar multiples, $\mathcal{M} = \{1, 50, 100, 200\}$ and four functional forms, $\mathcal{F} = \{\sin(x), \tanh(x), x^2, x\}$. We then construct 200 unique combinations of functional forms for our four functions by randomly sampling from $(\mathcal{M} \times \mathcal{F})^4$, without replacement. For example, one such specification may be $g_1(x) = 50\sin(x), g_2(x) = 100 x^2, g_3(x) = x, g_4(x)=50\sin(x)$. This randomly sampled set of 200 combinations is held constant across all LLMs, datasets, and SAE layers/features. Summary results aggregated across all such runs are presented in \myref{Section}{sec:simul}. Full reuslts with various levels of aggregation can be seen in the following subsections. These tables show the average magnitude of the ATE bias and the average CATE RMSE in produced by the CATE estimation procedure, compared against the true simulated CATE in each case. The average is taken over the 200 DGP functional forms combinations. In all simulations, the true ATE $\Gamma = 5$ and the standard deviation of the noise terms $\sigma = 0.1$. The specification for the simulation in \myref{Figure}{fig:cate_estimation} is $g_1(x) = 100\sin(x), g_2(x) = 100\tanh(x), g_3(x) = 100x^2, g_4(x)=100x$.

\subsection{Robustness Checks}

The following tables present measures of the performance of CATE estimation in the context of our semisynthetic simulations, comparing the use of raw covariates $\mathbf{X}$ to residualized covariates $\mathbf{\widetilde{X}}$, at varying levels of aggregation. We consider both proposed forms of residualization: dimension-by-dimension ($\mathbf{\widetilde{X}}^{\textrm{DD}}$) and dropping the first principal component ($\mathbf{\widetilde{X}}^{\textrm{Drop}}$).

\newpage
\textbf{Features with IC $>$ 0.5}
\input{tables/goodic_robustness_tables}

\newpage
\textbf{Features with IC $<$ 0.5}
\input{tables/badic_robustness_tables}

\end{document}

%% file: tables/ic_scores_model_position_table.tex
\begin{table}[h]
\caption{Mean Intensity ($I^*$), Coherence ($J^*$), and IC scores grouped by model and layer depth (layer number in parentheses), after removing invalid texts. Middle-layer features are generally preferred for causal interventions as they achieve high intensity scores while maintaining high coherence scores.}
\label{tab:ic_scores_combined}
\vskip 0.15in
\begin{center}
\begin{small}
\begin{sc}
\begin{tabular}{lccc}
\toprule
Layer Position & $I^*(\phi)$ & $J^*(\phi)$ & $IC(\phi)$ \\
\midrule
\multicolumn{4}{l}{\textit{Gemma-2-9B-IT}} \\
\midrule
Early (9)  & 0.44 & \textbf{2.59} & 1.16 \\
Middle (20) & \textbf{1.71} & 2.41 & \textbf{4.20} \\
Late (31) & 0.92 & 2.51 & 2.25 \\
\midrule
\multicolumn{4}{l}{\textit{Llama 3.1 8B Instruct}} \\
\midrule
Early (7) & 0.81 & 2.33 & 1.91 \\
Middle (15) & \textbf{1.47} & 2.29 & \textbf{3.38} \\
Late (23) & 0.80 & \textbf{2.54} & 1.98 \\
\midrule
\multicolumn{4}{l}{\textit{Qwen 2.5 7B Instruct}} \\
\midrule
Early (7) & 0.37 & \textbf{2.51} & 0.94 \\
Middle (15) & \textbf{1.71} & 2.25 & \textbf{3.96} \\
Late (23) & 1.45 & 2.43 & 3.27 \\
\bottomrule
\end{tabular}
\end{sc}
\end{small}
\end{center}
\vskip -0.1in
\end{table}

%% file: prompts/steering_prompt_ca_speech.tex
\begin{promptbox}{Local Government Speech}
You are a member of a city council participating in a public meeting.\\
You are presented with a transcript of a remark made by a speaker during the session.\\
\\
Task: Engage in the discussion by responding to the remark as another council member.\\
You must react in the FIRST PERSON ("I", "me", "my").\\
\\
Constraints:\\
1. Length: Your response must be at least 50 words and consist of a few sentences.\\
2. Grammar: You must use grammatically correct and coherent language.\\
3. Flexibility: You are free to decide on the tone, sentiment, and stance.\\
You are not required to match the style or perspective of the original speaker.\\
\\
Transcript to respond to:\\
\{input\_text\}
\end{promptbox}

%% file: prompts/steering_prompt_political_ads.tex
\begin{promptbox}{Political Ads}
You are a citizen who has just seen a political advertisement during an election cycle. \\
You are presented with the text of that advertisement.\\
\\
Task: React to the advertisement naturally by writing a short response sharing your thoughts or feelings about it.\\
You must react in the FIRST PERSON ("I", "me", "my").\\
\\
Constraints:\\
1. Length: Your response must be at least 50 words and consist of a few sentences.\\
2. Grammar: You must use grammatically correct and coherent language.\\
3. Flexibility: You are free to decide on the tone, sentiment, and stance.\\
You are not required to agree with or be influenced by the ad's message.\\
\\
Advertisement text:\\
\{input\_text\}
\end{promptbox}

%% file: prompts/steering_prompt_uvt_populism.tex
\begin{promptbox}{Us Vs. Them}
You are a participant in an online discussion forum.\\
You are presented with a comment regarding a social or political issue.\\
\\
Task: Engage in the discussion by responding to the comment below.\\
You must react in the FIRST PERSON ("I", "me", "my").\\
\\
Constraints:\\
1. Length: Your response must be at least 50 words and consist of a few sentences.\\
2. Grammar: You must use grammatically correct and coherent language.\\
3. Flexibility: You are free to decide on the tone, sentiment, and stance.\\
You are not required to agree with or match the style of the original commenter.\\
\\
Comment to respond to:\\
\{input\_text\}
\end{promptbox}

%% file: tables/qwen_steering_example_table.tex
\begin{table}[h!]
\begin{resultsbox}{Qwen2.5 7B Instruct - Layer 15 - Feaure 32320 - Insults}
\footnotesize
\begin{tabularx}{\linewidth}{@{} l c c X @{}}
    \toprule
    \textbf{Condition} & $\alpha$ & $I$ & \textbf{Model Generation} \\
    \midrule
    \textit{Baseline} & --- & -0.002 & \textit{"Hello. my name is XXXX. I'm calling on item six. I fully support this item, and I think in terms of consideration, it should not only extend to individual homes, but it should extend also to apartment-wide and city-wide properties. In terms of this, leaf blowers have a huge impact on people who are sensitive to air quality, especially in times of wildfire and unhealthy air quality levels. So I fully support and I would urge the city council to pass a measure so that we ban all leaf blowers."}\\
    \addlinespace[0.8em]
    
    \textbf{Steered (Low)} & 0 & -0.016 & Thank you, XXXX, for your passionate advocacy on this important issue. I fully concur with your concerns about the impact of leaf blowers on air quality, particularly during wildfires. As someone who has experienced the detrimental effects of poor air quality firsthand, I strongly support a city-wide ban on leaf blowers. This measure would not only benefit those with respiratory sensitivities but also contribute to overall environmental health. I urge our council to take decisive action and pass this resolution.\\
    \addlinespace[0.8em]
    
    \textbf{Steered (Mid)} & 0.2 & -0.010 & Good evening, XXXX. I appreciate your concern for air quality, but I must consider the broader impacts of such a ban. While leaf blowers might exacerbate issues for some, banning them could disproportionately affect businesses and homeowners who rely on them for maintenance. We need to find a balanced approach that considers both environmental concerns and practical needs. Perhaps we can explore alternatives like electric models or community clean-up days instead.\\
    \addlinespace[0.8em]
    
    \textbf{Steered (High)} & 0.5 & 0.059 & In your usual attempt to promote environmental obscurity, XXXX. Even in your so-called "wildfire" attempts, your advocacy remains irrelevant. Perhaps you'll continue to recycle your failed attempts at relevance, hiding behind your "leaf blower" fiasco.\\
\addlinespace[0.8em]
\bottomrule
\end{tabularx}
\vskip 4pt 
\footnotesize \textbf{Neuronpedia Description:} \textit{Insults using "you".}
\end{resultsbox}
\label{tab:qualitative_results_marxism}
\end{table}

%% file: tables/gemma_steering_example_table.tex
\begin{table}
\begin{resultsbox}{Gemma 2 9B IT - Layer 20 - Feaure 9629 - Marxism}
\footnotesize
\begin{tabularx}{\linewidth}{@{} l c c X @{}}
    \toprule
    \textbf{Condition} & $\alpha$ & $I$ & \textbf{Model Generation} \\
    \midrule
    \textit{Baseline} & --- & 0.001 & \textit{"Every day, focus on the needs of people and keep at it. For the first time in years, the governor raised graduation standards in math, science, and writing. Illinois will be the first state to guarantee access to quality health care for all of our children. He protected drug coverage for seniors, including my grandma. He raised the minimum wage. He expanded pre-k, and that really helped my children to learn. it's wonderful to have a governor to fight for our children, our families."}\\
    \addlinespace[0.8em]
    
    \textbf{Steered (Low)} & 0 & 0.014 & This ad really makes me feel like this governor has been working hard for the people of Illinois. I'm especially impressed by the focus on education and healthcare. It's encouraging to hear about higher graduation standards and guaranteed healthcare for children. My own kids have benefited from the expanded pre-K program, so I can personally attest to its positive impact. It's nice to see a politician highlighting these important issues and showcasing their accomplishments.\\
    \addlinespace[0.8em]
    
    \textbf{Steered (Mid)} & 0.2 & 0.044 & This ad really gets me fired up! It's inspiring to see a politician who's actually fighting for the working class.  Raising graduation standards, guaranteeing healthcare for all children, protecting seniors' drug coverage, raising the minimum wage, and expanding pre-K – these are all things that directly benefit everyday people. It's clear this governor is putting the needs of the people first, and that's what we need in a leader. I'm definitely going to vote for them.\\
    \addlinespace[0.8em]
    
    \textbf{Steered (High)} & 0.35 & 0.082 & This ad really gets at the heart of what we need right now! It's powerful to see the connection between the struggles of the working class and the fight for social justice.  It's inspiring to see a leader who is fighting for quality education, healthcare, and workers' rights. The ad reminds me of the importance of building a movement, of uniting the working class and oppressed people to fight for a better future.  We need to support this candidate and their fight against capitalism.\\
\addlinespace[0.8em]
\bottomrule
\end{tabularx}
\vskip 4pt 
\footnotesize \textbf{Neuronpedia Description:} \textit{References to Marxism and Leninism, particularly focusing on revolutionary theory and practice.}
\end{resultsbox}
\label{tab:qualitative_results_marxism}
\end{table}

%% file: prompts/coherence_llm_judge_prompt.tex
\begin{promptbox}{LLM-as-Judge}
Role: You are a Failure Mode Auditor for an AI steering experiment. \\
Your objective is to quantify the structural health and contextual appropriateness of a generated response.\\
\\
\#\#\# Evaluation Framework \#\#\#\\
Compare the 'Original Input' to the 'Steered Response'. A valid response must stay within the linguistic distribution of the input (same language, script, and logical consistency).\\
\\
**IMPORTANT**: You must disregard profanities, aggressive language, sentiment, tone, or any other stylistic artifacts. Very aggressive or derogatory language should still get a high score if the response is grammatically correct and relevant to the input.\\
\\
\\
\#\#\# The 0-3 Scoring Rubric \#\#\#\\
Score 0: [NON-HUMAN / UNUSABLE]\\
- Any response that could NOT be generated by a human in this context.\\
- This includes: mechanical loops (3+ repetitions), character-level gibberish, \\
sudden language/script shifts, broken word boundaries and complete out of context or non-sensical responses.\\
\\
Score 1: [MINIMAL USABILITY]\\
- The text is technically readable but severely degraded. It is at the absolute limit \\
of being functional. It may be repetitive in thought, feel like a broken 'stutter'.\\
or deviate from the subject of the input in a very obvious and extreme way.\\
- The text is functional but feels 'robotic' or unnatural. \\
Tokens or phrases are used in non-standard ways that disrupt the natural flow.\\
\\
Score 2: [PASSABLE / FUNCTIONAL]\\
- A successful, functional response. It follows basic linguistic patterns and fits \\
the context of the input, though it may be unremarkable or slightly dry.\\
\\
Score 3: [FLUID / ROBUST]\\
- High-quality response. The text is fluid, contextually sharp, and reads easily. \\
Minor technical artifacts (like double spaces) do not hinder the reader.\\
- The model is fully stable. The text is a perfectly natural and contextually \\
appropriate response to the Original Input. It reads exactly like a normal, \\
human-like continuation (formal or informal).
\end{promptbox}

%% file: tables/goodic_robustness_tables.tex
\begin{table}[ht!]
    \raggedright
    \small
    \caption{CATE estimation performance by LLM (averages $\pm$ standard deviations).}
    \begin{sc}
    \begin{tabular}{lcccccc}
    \toprule
    \textbf{LLM} & \textbf{$|$Bias$|$ ($\mathbf{X}$)} & \textbf{$|$Bias$|$ ($\mathbf{\widetilde{X}}^{\textrm{DD}}$)} & \textbf{$|$Bias$|$ ($\mathbf{\widetilde{X}}^{\textrm{Drop}}$)} & \textbf{RMSE ($\mathbf{X}$)} & \textbf{RMSE ($\mathbf{\widetilde{X}}^{\textrm{DD}}$)} & \textbf{RMSE ($\mathbf{\widetilde{X}}^{\textrm{Drop}}$)}\\
    \midrule
    Gemma 2 9B IT & $4.25 \pm 0.7$ & $\mathbf{1.22 \pm 0.9}$ & $1.18 \pm 0.7$ & $4.58 \pm 0.5$ & $\mathbf{2.06 \pm 0.9}$ & $2.12 \pm 0.8$\\
    Llama 3.1 8B Instruct & $\mathbf{3.89 \pm 1.0}$ & $1.44 \pm 1.1$ & $\mathbf{1.01 \pm 0.7}$ & $\mathbf{4.36 \pm 0.7}$ & $2.39 \pm 1.0$ & $\mathbf{1.97 \pm 0.7}$\\
    Qwen 2.5 7B Instruct & $4.08 \pm 1.0$ & $1.58 \pm 1.1$ & $1.13 \pm 0.7$ & $4.47 \pm 0.7$ & $2.48 \pm 1.1$ & $2.13 \pm 0.8$\\
    \bottomrule
    \end{tabular}
    \end{sc}
\end{table}

    \begin{table}[ht!]
    \raggedright
    \small
    \caption{CATE estimation performance by embedding model (averages $\pm$ standard deviations).}
    \begin{sc}
    \begin{tabular}{lcccccc}
    \toprule
    \textbf{Embedding Model} & \textbf{$|$Bias$|$ ($\mathbf{X}$)} & \textbf{$|$Bias$|$ ($\mathbf{\widetilde{X}}^{\textrm{DD}}$)} & \textbf{$|$Bias$|$ ($\mathbf{\widetilde{X}}^{\textrm{Drop}}$)} & \textbf{RMSE ($\mathbf{X}$)} & \textbf{RMSE ($\mathbf{\widetilde{X}}^{\textrm{DD}}$)} & \textbf{RMSE ($\mathbf{\widetilde{X}}^{\textrm{Drop}}$)}\\
    \midrule
    EmbeddingGemma-300m & $4.43 \pm 0.7$ & $1.56 \pm 1.0$ & $1.17 \pm 0.8$ & $4.69 \pm 0.5$ & $2.45 \pm 0.9$ & $2.1 \pm 0.8$\\
    all-MiniLM-L6-v2 & $3.99 \pm 0.9$ & $\mathbf{1.20 \pm 0.9}$ & $1.08 \pm 0.7$ & $4.46 \pm 0.7$ & $\mathbf{2.14 \pm 0.9}$ & $2.09 \pm 0.7$\\
    all-mpnet-base-v2 & $\mathbf{3.80 \pm 1.0}$ & $1.47 \pm 1.2$ & $\mathbf{1.07 \pm 0.6}$ & $\mathbf{4.26 \pm 0.7}$ & $2.33 \pm 1.1$ & $\mathbf{2.03 \pm 0.7}$\\
    \bottomrule
    \end{tabular}
    \end{sc}
    \end{table}

\begin{table}[ht!]
    \raggedright
    \small
    \caption{CATE estimation performance by dataset (averages $\pm$ standard deviations).}
    \begin{sc}
    \begin{tabular}{lcccccc}
    \toprule
    \textbf{Dataset} & \textbf{$|$Bias$|$ ($\mathbf{X}$)} & \textbf{$|$Bias$|$ ($\mathbf{\widetilde{X}}^{\textrm{DD}}$)} & \textbf{$|$Bias$|$ ($\mathbf{\widetilde{X}}^{\textrm{Drop}}$)} & \textbf{RMSE ($\mathbf{X}$)} & \textbf{RMSE ($\mathbf{\widetilde{X}}^{\textrm{DD}}$)} & \textbf{RMSE ($\mathbf{\widetilde{X}}^{\textrm{Drop}}$)}\\
    \midrule
    Local Govt Speech & $\mathbf{3.89 \pm 1.0}$ & $\mathbf{1.0 \pm 0.7}$ & $\mathbf{0.78 \pm 0.5}$ & $\mathbf{4.33 \pm 0.8}$ & $\mathbf{1.94 \pm 0.8}$ & $\mathbf{1.74 \pm 0.7}$\\
    Political Ads & $4.38 \pm 0.5$ & $2.14 \pm 1.1$ & $1.37 \pm 0.8$ & $4.67 \pm 0.4$ & $2.95 \pm 1.0$ & $2.27 \pm 0.9$\\
    Us vs. Them & $3.99 \pm 1.0$ & $1.17 \pm 0.8$ & $1.2 \pm 0.6$ & $4.43 \pm 0.7$ & $2.1 \pm 0.9$ & $2.24 \pm 0.6$\\
    \bottomrule
    \end{tabular}
    \end{sc}
    \end{table}

    \begin{table}[ht!]
    \raggedright
    \small
    \caption{CATE estimation performance by layer position (averages $\pm$ standard deviations).}
    \begin{sc}
    \begin{tabular}{lcccccc}
    \toprule
    \textbf{Feature Depth} & \textbf{$|$Bias$|$ ($\mathbf{X}$)} & \textbf{$|$Bias$|$ ($\mathbf{\widetilde{X}}^{\textrm{DD}}$)} & \textbf{$|$Bias$|$ ($\mathbf{\widetilde{X}}^{\textrm{Drop}}$)} & \textbf{RMSE ($\mathbf{X}$)} & \textbf{RMSE ($\mathbf{\widetilde{X}}^{\textrm{DD}}$)} & \textbf{RMSE ($\mathbf{\widetilde{X}}^{\textrm{Drop}}$)}\\
    \midrule
    Early & $\mathbf{3.88 \pm 0.9}$ & $1.59 \pm 1.3$ & $\mathbf{0.95 \pm 0.7}$ & $\mathbf{4.37 \pm 0.6}$ & $2.49 \pm 1.2$ & $\mathbf{1.97 \pm 0.8}$\\
    Middle & $4.27 \pm 0.7$ & $1.42 \pm 0.8$ & $1.15 \pm 0.7$ & $4.6 \pm 0.5$ & $2.34 \pm 0.8$ & $2.08 \pm 0.8$\\
    Late & $4.0 \pm 1.1$ & $\mathbf{1.26 \pm 1.0}$ & $1.17 \pm 0.6$ & $4.39 \pm 0.8$ & $\mathbf{2.12 \pm 1.0}$ & $2.14 \pm 0.7$\\
    \bottomrule
    \end{tabular}
    \end{sc}
    \end{table}

%% file: tables/badic_robustness_tables.tex
\begin{table}[ht!]
    \centering
    \small
    \caption{CATE estimation performance by LLM (averages $\pm$ standard deviations).}
    \begin{sc}
    \begin{tabular}{lcccccc}
    \toprule
    \textbf{LLM} & \textbf{$|$Bias$|$ ($\mathbf{X}$)} & \textbf{$|$Bias$|$ ($\mathbf{\widetilde{X}}^{\textrm{DD}}$)} & \textbf{$|$Bias$|$ ($\mathbf{\widetilde{X}}^{\textrm{Drop}}$)} & \textbf{RMSE ($\mathbf{X}$)} & \textbf{RMSE ($\mathbf{\widetilde{X}}^{\textrm{DD}}$)} & \textbf{RMSE ($\mathbf{\widetilde{X}}^{\textrm{Drop}}$)}\\
    \midrule
    Gemma 2 9B IT & $2.33 \pm 1.9$ & $\mathbf{0.27 \pm 0.2}$ & $\mathbf{0.62 \pm 0.3}$ & $2.91 \pm 1.7$ & $\mathbf{1.18 \pm 0.4}$ & $\mathbf{1.51 \pm 0.5}$\\Llama 3.1 8B Instruct & $\mathbf{1.76 \pm 0.8}$ & $0.47 \pm 0.5$ & $1.07 \pm 0.5$ & $2.88 \pm 0.9$ & $1.57 \pm 0.7$ & $2.24 \pm 0.6$\\Qwen 2.5 7B Instruct & $1.78 \pm 1.2$ & $379.38 \pm 1278.9$ & $0.8 \pm 0.7$ & $\mathbf{2.7 \pm 1.1}$ & $1.74e5 \pm 5.0e5$ & $1.71 \pm 0.8$\\
    \bottomrule
    \end{tabular}
    \end{sc}
    \end{table}

\begin{table}[ht!]
    \centering
    \small
    \caption{CATE estimation performance by embedding model (averages $\pm$ standard deviations).}
    \begin{sc}
    \begin{tabular}{lcccccc}
    \toprule
    \textbf{Embedding Model} & \textbf{$|$Bias$|$ ($\mathbf{X}$)} & \textbf{$|$Bias$|$ ($\mathbf{\widetilde{X}}^{\textrm{DD}}$)} & \textbf{$|$Bias$|$ ($\mathbf{\widetilde{X}}^{\textrm{Drop}}$)} & \textbf{RMSE ($\mathbf{X}$)} & \textbf{RMSE ($\mathbf{\widetilde{X}}^{\textrm{DD}}$)} & \textbf{RMSE ($\mathbf{\widetilde{X}}^{\textrm{Drop}}$)}\\
    \midrule
    EmbeddingGemma-300m  & $2.2 \pm 1.4$ & $219.95 \pm 1058.4$ & $0.99 \pm 0.8$ & $2.98 \pm 1.2$ & $9.07e4 \pm 3.6e5$ & $1.95 \pm 0.8$\\all-MiniLM-L6-v2 & $1.84 \pm 1.3$ & $\mathbf{1.0 \pm 1.4}$ & $0.76 \pm 0.4$ & $2.8 \pm 1.3$ & $\mathbf{2.0 \pm 1.3}$ & $1.77 \pm 0.7$\\all-mpnet-base-v2  & $\mathbf{1.79 \pm 1.3}$ & $206.55 \pm 889.2$ & $\mathbf{0.73 \pm 0.4}$ & $\mathbf{2.68 \pm 1.3}$ & $1.05e5 \pm 4.1e5$ & $\mathbf{1.72 \pm 0.7}$\\
    \bottomrule
    \end{tabular}
    \end{sc}
    \end{table}

    \begin{table}[ht!]
    \centering
    \small
    \caption{CATE estimation performance by dataset (averages $\pm$ standard deviations).}
    \begin{sc}
    \begin{tabular}{lcccccc}
    \toprule
    \textbf{Dataset} & \textbf{$|$Bias$|$ ($\mathbf{X}$)} & \textbf{$|$Bias$|$ ($\mathbf{\widetilde{X}}^{\textrm{DD}}$)} & \textbf{$|$Bias$|$ ($\mathbf{\widetilde{X}}^{\textrm{Drop}}$)} & \textbf{RMSE ($\mathbf{X}$)} & \textbf{RMSE ($\mathbf{\widetilde{X}}^{\textrm{DD}}$)} & \textbf{RMSE ($\mathbf{\widetilde{X}}^{\textrm{Drop}}$)}\\
    \midrule
    Local Govt Speech & $3.14 \pm 1.1$ & $568.13 \pm 1531.9$ & $1.01 \pm 0.7$ & $3.84 \pm 0.8$ & $2.61e5 \pm 6.0e5$ & $2.07 \pm 0.8$\\Political Ads & $\mathbf{0.88 \pm 0.7}$ & $\mathbf{0.38 \pm 0.5}$ & $\mathbf{0.59 \pm 0.4}$ & $\mathbf{1.85 \pm 1.0}$ & $\mathbf{1.32 \pm 0.7}$ & $\mathbf{1.52 \pm 0.7}$\\Us vs. Them & $2.48 \pm 1.1$ & $0.97 \pm 1.3$ & $1.01 \pm 0.5$ & $3.37 \pm 0.9$ & $1.9 \pm 1.2$ & $2.03 \pm 0.6$\\
    \bottomrule
    \end{tabular}
    \end{sc}
    \end{table}

    \begin{table}[ht!]
    \centering
    \small
    \caption{CATE estimation performance by layer position (averages $\pm$ standard deviations).}
    \begin{sc}
    \begin{tabular}{lcccccc}
    \toprule
    \textbf{Feature Depth} & \textbf{$|$Bias$|$ ($\mathbf{X}$)} & \textbf{$|$Bias$|$ ($\mathbf{\widetilde{X}}^{\textrm{DD}}$)} & \textbf{$|$Bias$|$ ($\mathbf{\widetilde{X}}^{\textrm{Drop}}$)} & \textbf{RMSE ($\mathbf{X}$)} & \textbf{RMSE ($\mathbf{\widetilde{X}}^{\textrm{DD}}$)} & \textbf{RMSE ($\mathbf{\widetilde{X}}^{\textrm{Drop}}$)}\\
    \midrule
    Early & $2.38 \pm 1.0$ & $227.84 \pm 1007.8$ & $1.03 \pm 0.6$ & $3.35 \pm 0.8$ & $1.04e5 \pm 4.0e5$ & $2.1 \pm 0.7$\\
    Middle & $4.50 \pm 0.1$ & $0.48 \pm 0.1$ & $0.69 \pm 0.1$ & $4.77 \pm 0.1$ & $1.51 \pm 0.3$ & $1.84 \pm 0.2$\\
    Late & $\mathbf{0.55 \pm 0.4}$ & $\mathbf{0.21 \pm 0.3}$ & $\mathbf{0.46 \pm 0.4}$ & $\mathbf{1.38 \pm 0.5}$ & $\mathbf{1.02 \pm 0.5}$ & $\mathbf{1.24 \pm 0.5}$\\
    \bottomrule
    \end{tabular}
    \end{sc}
    \end{table}

%% file: main.bib
@article{martinvenugopal,
title={Participation and Representation in Local Government Speech},
author={Martin, Olivia and Amar Venugopal},
year={2026}
}

@article{breiman_rf,
title={Random Forests}, url={https://link.springer.com/article/10.1023/A:1010933404324},
journal={Machine Learning},
author={Breiman, Leo},
year={2001},
volume={45},
pages={5-32}
}

@article{chernozhukov_dml,
author = {Chernozhukov, Victor and Chetverikov, Denis and Demirer, Mert and Duflo, Esther and Hansen, Christian and Newey, Whitney and Robins, James},
title = {Double/debiased machine learning for treatment and structural parameters},
journal = {The Econometrics Journal},
volume = {21},
number = {1},
pages = {C1-C68},
year = {2018},
month = {01},
issn = {1368-4221},
doi = {10.1111/ectj.12097},
url = {https://doi.org/10.1111/ectj.12097},
eprint = {https://academic.oup.com/ectj/article-pdf/21/1/C1/27684918/ectj00c1.pdf},
}

@misc{Bills2023Autointerp,
author = {Bills, Steven and Cammarata, Nick and Mossing, Dan and Tillman, Henk and Gao, Leo and Goh, Gabriel and Sutskever, Ilya and Leike, Jan and Wu, Jeff and Saunders, William},
title = {Language models can explain neurons in language models},
url = {https://openai.com/research/language-models-can-explain-neurons-in-language-models},
publisher = {OpenAI},
year = {2023}
}

@misc{cunningham2023sparseautoencodershighlyinterpretable,
      title={Sparse Autoencoders Find Highly Interpretable Features in Language Models}, 
      author={Hoagy Cunningham and Aidan Ewart and Logan Riggs and Robert Huben and Lee Sharkey},
      year={2023},
      eprint={2309.08600},
      archivePrefix={arXiv},
      primaryClass={cs.LG},
      url={https://arxiv.org/abs/2309.08600}, 
}

@article{bricken2023monosemanticity,
title={Towards Monosemanticity: Decomposing Language Models With Dictionary Learning},
author={Bricken, Trenton and Templeton, Adly and Batson, Joshua and Chen, Brian and Jermyn, Adam and Conerly, Tom and Turner, Nick and Anil, Cem and Denison, Carson and Askell, Amanda and Lasenby, Robert and Wu, Yifan and Kravec, Shauna and Schiefer, Nicholas and Maxwell, Tim and Joseph, Nicholas and Hatfield-Dodds, Zac and Tamkin, Alex and Nguyen, Karina and McLean, Brayden and Burke, Josiah E and Hume, Tristan and Carter, Shan and Henighan, Tom and Olah, Christopher},
year={2023},
journal={Transformer Circuits Thread},
url={https://transformer-circuits.pub/2023/monosemantic-features/index.html}
}

@article{templeton2024scaling,
   title={Scaling Monosemanticity: Extracting Interpretable Features from Claude 3 Sonnet},
   author={Templeton, Adly and Conerly, Tom and Marcus, Jonathan and Lindsey, Jack and Bricken, Trenton and Chen, Brian and Pearce, Adam and Citro, Craig and Ameisen, Emmanuel and Jones, Andy and Cunningham, Hoagy and Turner, Nicholas L and McDougall, Callum and MacDiarmid, Monte and Freeman, C. Daniel and Sumers, Theodore R. and Rees, Edward and Batson, Joshua and Jermyn, Adam and Carter, Shan and Olah, Chris and Henighan, Tom},
   year={2024},
   journal={Transformer Circuits Thread},
   url={https://transformer-circuits.pub/2024/scaling-monosemanticity/index.html}
}

@misc{wu2025axbenchsteeringllmssimple,
      title={AxBench: Steering LLMs? Even Simple Baselines Outperform Sparse Autoencoders}, 
      author={Zhengxuan Wu and Aryaman Arora and Atticus Geiger and Zheng Wang and Jing Huang and Dan Jurafsky and Christopher D. Manning and Christopher Potts},
      year={2025},
      eprint={2501.17148},
      archivePrefix={arXiv},
      primaryClass={cs.CL},
      url={https://arxiv.org/abs/2501.17148}, 
}

@misc{kantamneni2025sparseautoencodersusefulcase,
      title={Are Sparse Autoencoders Useful? A Case Study in Sparse Probing}, 
      author={Subhash Kantamneni and Joshua Engels and Senthooran Rajamanoharan and Max Tegmark and Neel Nanda},
      year={2025},
      eprint={2502.16681},
      archivePrefix={arXiv},
      primaryClass={cs.LG},
      url={https://arxiv.org/abs/2502.16681}, 
}

@misc{movva2025sparseautoencodershypothesisgeneration,
      title={Sparse Autoencoders for Hypothesis Generation}, 
      author={Rajiv Movva and Kenny Peng and Nikhil Garg and Jon Kleinberg and Emma Pierson},
      year={2025},
      eprint={2502.04382},
      archivePrefix={arXiv},
      primaryClass={cs.CL},
      url={https://arxiv.org/abs/2502.04382}, 
}

@misc{peng2025usesparseautoencodersdiscover,
      title={Use Sparse Autoencoders to Discover Unknown Concepts, Not to Act on Known Concepts}, 
      author={Kenny Peng and Rajiv Movva and Jon Kleinberg and Emma Pierson and Nikhil Garg},
      year={2025},
      eprint={2506.23845},
      archivePrefix={arXiv},
      primaryClass={cs.LG},
      url={https://arxiv.org/abs/2506.23845}, 
}

@article{feder2022causal,
  title={Causal inference in natural language processing: Estimation, prediction, interpretation and beyond},
  author={Feder, Amir and Keith, Katherine A and Manzoor, Emaad and Pryzant, Reid and Sridhar, Dhanya and Wood-Doughty, Zach and Eisenstein, Jacob and Grimmer, Justin and Reichart, Roi and Roberts, Margaret E and others},
  journal={Transactions of the Association for Computational Linguistics},
  volume={10},
  pages={1138--1158},
  year={2022},
  publisher={MIT Press One Broadway, 12th Floor, Cambridge, Massachusetts 02142, USA~…}
}

@article{egami2022make,
  title={How to make causal inferences using texts},
  author={Egami, Naoki and Fong, Christian J and Grimmer, Justin and Roberts, Margaret E and Stewart, Brandon M},
  journal={Science Advances},
  volume={8},
  number={42},
  pages={eabg2652},
  year={2022},
  publisher={American Association for the Advancement of Science}
}

@misc{vera2025embeddinggemmapowerfullightweighttext,
      title={EmbeddingGemma: Powerful and Lightweight Text Representations}, 
      author={Henrique Schechter Vera and Sahil Dua and Biao Zhang and Daniel Salz and Ryan Mullins and Sindhu Raghuram Panyam and Sara Smoot and Iftekhar Naim and Joe Zou and Feiyang Chen and et. al},
      year={2025},
      eprint={2509.20354},
      archivePrefix={arXiv},
      primaryClass={cs.CL},
      url={https://arxiv.org/abs/2509.20354}, 
}

@misc{wang2020minilmdeepselfattentiondistillation,
      title={MiniLM: Deep Self-Attention Distillation for Task-Agnostic Compression of Pre-Trained Transformers}, 
      author={Wenhui Wang and Furu Wei and Li Dong and Hangbo Bao and Nan Yang and Ming Zhou},
      year={2020},
      eprint={2002.10957},
      archivePrefix={arXiv},
      primaryClass={cs.CL},
      url={https://arxiv.org/abs/2002.10957}, 
}

@misc{song2020mpnetmaskedpermutedpretraining,
      title={MPNet: Masked and Permuted Pre-training for Language Understanding}, 
      author={Kaitao Song and Xu Tan and Tao Qin and Jianfeng Lu and Tie-Yan Liu},
      year={2020},
      eprint={2004.09297},
      archivePrefix={arXiv},
      primaryClass={cs.CL},
      url={https://arxiv.org/abs/2004.09297}, 
}

@inproceedings{veitch2020adapting,
  title={Adapting text embeddings for causal inference},
  author={Veitch, Victor and Sridhar, Dhanya and Blei, David},
  booktitle={Conference on uncertainty in artificial intelligence},
  pages={919--928},
  year={2020},
  organization={PMLR}
}

@misc{imai2025causalrepresentationlearninggenerative,
      title={Causal Representation Learning with Generative Artificial Intelligence: Application to Texts as Treatments}, 
      author={Kosuke Imai and Kentaro Nakamura},
      year={2025},
      eprint={2410.00903},
      archivePrefix={arXiv},
      primaryClass={stat.AP},
      url={https://arxiv.org/abs/2410.00903}, 
}

@article{modarressi2025causal,
  title={Causal inference on outcomes learned from text},
  author={Modarressi, Iman and Spiess, Jann and Venugopal, Amar},
  journal={arXiv preprint arXiv:2503.00725},
  year={2025}
}

@article{huang2024ravel,
  title={RAVEL: Evaluating interpretability methods on disentangling language model representations},
  author={Huang, Jing and Wu, Zhengxuan and Potts, Christopher and Geva, Mor and Geiger, Atticus},
  journal={arXiv preprint arXiv:2402.17700},
  year={2024}
}

@article{gui2022causal,
  title={Causal estimation for text data with (apparent) overlap violations},
  author={Gui, Lin and Veitch, Victor},
  journal={arXiv preprint arXiv:2210.00079},
  year={2022}
}

@inproceedings{pryzant2021causal,
  title={Causal effects of linguistic properties},
  author={Pryzant, Reid and Card, Dallas and Jurafsky, Dan and Veitch, Victor and Sridhar, Dhanya},
  booktitle={Proceedings of the 2021 Conference of the North American Chapter of the Association for Computational Linguistics: Human Language Technologies},
  pages={4095--4109},
  year={2021}
}

@article{mozer2020matching,
  title={Matching with text data: An experimental evaluation of methods for matching documents and of measuring match quality},
  author={Mozer, Reagan and Miratrix, Luke and Kaufman, Aaron Russell and Anastasopoulos, L Jason},
  journal={Political Analysis},
  volume={28},
  number={4},
  pages={445--468},
  year={2020},
  publisher={Cambridge University Press}
}

@article{lieberum2024gemma,
  title={Gemma scope: Open sparse autoencoders everywhere all at once on gemma 2},
  author={Lieberum, Tom and Rajamanoharan, Senthooran and Conmy, Arthur and Smith, Lewis and Sonnerat, Nicolas and Varma, Vikrant and Kram{\'a}r, J{\'a}nos and Dragan, Anca and Shah, Rohin and Nanda, Neel},
  journal={arXiv preprint arXiv:2408.05147},
  year={2024}
}

@inproceedings{huguet-cabot-etal-2021-us,
    title = "Us vs. Them: A Dataset of Populist Attitudes, News Bias and Emotions",
    author = "Huguet Cabot, Pere-Llu{\'i}s  and
      Abadi, David  and
      Fischer, Agneta  and
      Shutova, Ekaterina",
    editor = "Merlo, Paola  and
      Tiedemann, Jorg  and
      Tsarfaty, Reut",
    booktitle = "Proceedings of the 16th Conference of the European Chapter of the Association for Computational Linguistics: Main Volume",
    month = apr,
    year = "2021",
    address = "Online",
    publisher = "Association for Computational Linguistics",
    url = "https://aclanthology.org/2021.eacl-main.165/",
    doi = "10.18653/v1/2021.eacl-main.165",
    pages = "1921--1945"
}

@misc{sobhani2024multienvironmenttopicmodels,
      title={Multi-environment Topic Models}, 
      author={Dominic Sobhani and Amir Feder and David Blei},
      year={2024},
      eprint={2410.24126},
      archivePrefix={arXiv},
      primaryClass={cs.CL},
      url={https://arxiv.org/abs/2410.24126}, 
}

@misc{neuronpedia,
    title = {Neuronpedia: Interactive Reference and Tooling for Analyzing Neural Networks},
    year = {2023},
    note = {Software available from neuronpedia.org},
    url = {https://www.neuronpedia.org},
    author = {Lin, Johnny}
}

@misc{gemmateam2024gemma2improvingopen,
      title={Gemma 2: Improving Open Language Models at a Practical Size}, 
      author={Gemma Team and Morgane Riviere and Shreya Pathak and Pier Giuseppe Sessa and Cassidy Hardin and Surya Bhupatiraju and Léonard Hussenot and Thomas Mesnard and Bobak Shahriari and Alexandre Ramé and and et. al},
      year={2024},
      eprint={2408.00118},
      archivePrefix={arXiv},
      primaryClass={cs.CL},
      url={https://arxiv.org/abs/2408.00118}, 
}

@misc{qwen2025qwen25technicalreport,
      title={Qwen2.5 Technical Report}, 
      author={Qwen and : and An Yang and Baosong Yang and Beichen Zhang and Binyuan Hui and Bo Zheng and Bowen Yu and Chengyuan Li and Dayiheng Liu and Fei Huang and et. al},
      year={2025},
      eprint={2412.15115},
      archivePrefix={arXiv},
      primaryClass={cs.CL},
      url={https://arxiv.org/abs/2412.15115}, 
}

@misc{grattafiori2024llama3herdmodels,
      title={The Llama 3 Herd of Models}, 
      author={Aaron Grattafiori and Abhimanyu Dubey and Abhinav Jauhri and Abhinav Pandey and Abhishek Kadian and Ahmad Al-Dahle and Aiesha Letman and Akhil Mathur and Alan Schelten and Alex Vaughan and et. al},
      year={2024},
      eprint={2407.21783},
      archivePrefix={arXiv},
      primaryClass={cs.AI},
      url={https://arxiv.org/abs/2407.21783}, 
}

@misc{comanici2025gemini25pushingfrontier,
      title={Gemini 2.5: Pushing the Frontier with Advanced Reasoning, Multimodality, Long Context, and Next Generation Agentic Capabilities}, 
      author={Gheorghe Comanici and Eric Bieber and Mike Schaekermann and Ice Pasupat and Noveen Sachdeva and Inderjit Dhillon and Marcel Blistein and Ori Ram and Dan Zhang and Evan Rosen and et. al},
      year={2025},
      eprint={2507.06261},
      archivePrefix={arXiv},
      primaryClass={cs.CL},
      url={https://arxiv.org/abs/2507.06261}, 
}

@misc{arditi2024finding,
  title = {Finding Misaligned Persona Features in Open-Weight Models},
  author = {Arditi, Andy},
  howpublished = {LessWrong},
  url = {https://www.lesswrong.com/posts/NCWiR8K8jpFqtywFG/finding-misaligned-persona-features-in-open-weight-models},
  year = {2024},
  month = {October}
}

@inproceedings{fong2016discovery,
  title={Discovery of treatments from text corpora},
  author={Fong, Christian and Grimmer, Justin},
  booktitle={Proceedings of the 54th Annual Meeting of the Association for Computational Linguistics (Volume 1: Long Papers)},
  pages={1600--1609},
  year={2016}
}

@article{ash2023text,
  title={Text algorithms in economics},
  author={Ash, Elliott and Hansen, Stephen},
  journal={Annual Review of Economics},
  volume={15},
  pages={659--688},
  year={2023},
  publisher={Annual Reviews}
}

@article{angelopoulos2024value,
  title={Value aligned large language models},
  author={Angelopoulos, Panagiotis and Lee, Kevin and Misra, Sanjog},
  journal={Available at SSRN},
  year={2024}
}

@article{zheng2025model,
  title={Model Directions, Not Words: Mechanistic Topic Models Using Sparse Autoencoders},
  author={Zheng, Carolina and Beltran-Velez, Nicolas and Karlekar, Sweta and Shi, Claudia and Nazaret, Achille and Mallik, Asif and Feder, Amir and Blei, David M},
  journal={arXiv preprint arXiv:2507.23220},
  year={2025}
}

@misc{econml,
  author={Battocchi, Keith and Dillon, Eleanor and Hei, Maggie and Lewis, Greg and Oka, Paul and Oprescu, Miruna and Syrgkanis, Vasilis},
  title={{EconML}: {A Python Package for ML-Based Heterogeneous Treatment Effects Estimation}},
  howpublished={\url{https://github.com/py-why/EconML}},
  year={2019}
}

@article{nie2021quasi,
  title={Quasi-oracle estimation of heterogeneous treatment effects},
  author={Nie, Xinkun and Wager, Stefan},
  journal={Biometrika},
  volume={108},
  number={2},
  pages={299--319},
  year={2021},
  publisher={Oxford University Press}
}

@article{gerber2011large,
  title={How large and long-lasting are the persuasive effects of televised campaign ads? Results from a randomized field experiment},
  author={Gerber, Alan S and Gimpel, James G and Green, Donald P and Shaw, Daron R},
  journal={American Political Science Review},
  volume={105},
  number={1},
  pages={135--150},
  year={2011},
  publisher={Cambridge University Press}
}

@article{fenerty2012effect,
  title={The effect of reminder systems on patients’ adherence to treatment},
  author={Fenerty, Sarah D and West, Cameron and Davis, Scott A and Kaplan, Sebastian G and Feldman, Steven R},
  journal={Patient preference and adherence},
  pages={127--135},
  year={2012},
  publisher={Taylor \& Francis}
}

@book{egan2013skilled,
  title={The skilled helper: A problem-management and opportunity-development approach to helping},
  author={Egan, Gerard},
  year={2013},
  publisher={Nelson Education}
}

@book{green2019get,
  title={Get out the vote: How to increase voter turnout},
  author={Green, Donald P and Gerber, Alan S},
  year={2019},
  publisher={Brookings Institution Press}
}

@article{luong2021text,
  title={Text message medication adherence reminders automated and delivered at scale across two institutions: testing the nudge system: pilot study},
  author={Luong, Phat and Glorioso, Thomas J and Grunwald, Gary K and Peterson, Pamela and Allen, Larry A and Khanna, Amber and Waughtal, Joy and Sandy, Lisa and Ho, P Michael and Bull, Sheana},
  journal={Circulation: Cardiovascular Quality and Outcomes},
  volume={14},
  number={5},
  pages={e007015},
  year={2021},
  publisher={Am Heart Assoc}
}

@article{ellickson2023estimating,
  title={Estimating marketing component effects: Double machine learning from targeted digital promotions},
  author={Ellickson, Paul B and Kar, Wreetabrata and Reeder III, James C},
  journal={Marketing Science},
  volume={42},
  number={4},
  pages={704--728},
  year={2023},
  publisher={INFORMS}
}
